\title{Relaxing the Additivity Constraints in Decentralized No-Regret High-Dimensional Bayesian Optimization}
\author{%
  Anthony Bardou\\ 
  ENS Lyon, CNRS | IC, EPFL\\
  Lyon, France | Lausanne, Switzerland\\
  \texttt{anthony.bardou@epfl.ch} \\
  \And
  Patrick Thiran\\
  IC, EPFL \\
  Lausanne, Switzerland \\
  \texttt{patrick.thiran@epfl.ch} \\
  \And
  Thomas Begin \\
  ENS Lyon, UCBL, CNRS, LIP \\
  Lyon, France \\
  \texttt{thomas.begin@ens-lyon.fr} \\
}
\theoremstyle{plain}
\newtheorem{theorem}{Theorem}[section]
\newtheorem{proposition}[theorem]{Proposition}
\newtheorem{lemma}[theorem]{Lemma}
\newtheorem{corollary}[theorem]{Corollary}
\theoremstyle{definition}
\newtheorem{definition}[theorem]{Definition}
\newtheorem{assumption}[theorem]{Assumption}
\theoremstyle{remark}
\newcommand{\AlgLagrangian}{DuMBO}
\DeclareMathOperator*{\argmax}{arg\,max}
\begin{document}

\maketitle

\begin{abstract}
  Bayesian Optimization (BO) is typically used to optimize an unknown function $f$ that is noisy and costly to evaluate, by exploiting an acquisition function that must be maximized at each optimization step. Even if provably asymptotically optimal BO algorithms are efficient at optimizing low-dimensional functions, scaling them to high-dimensional spaces remains an open problem, often tackled by assuming an additive structure for $f$. By doing so, BO algorithms typically introduce additional restrictive assumptions on the additive structure that reduce their applicability domain. This paper contains two main contributions: (i)~we relax the restrictive assumptions on the additive structure of $f$ \textit{without} weakening the maximization guarantees of the acquisition function, and (ii)~we address the over-exploration problem for decentralized BO algorithms. To these ends, we propose DuMBO, an asymptotically optimal decentralized BO algorithm that achieves very competitive performance against state-of-the-art BO algorithms, especially when the additive structure of $f$ comprises high-dimensional factors.
\end{abstract}

\section{Introduction}

Many real-world applications involve the optimization of an unknown objective function $f$ that is noisy and costly to evaluate. Examples of such tasks include hyper parameters tuning in deep neural networks~\cite{bergstra13}, robotics~\cite{lizotte07}, networking~\cite{hornby06} and computational biology~\cite{gonzalez14}. In such applications, $f$ can be seen as a black box that can only be discovered by successive queries. This prevents the use of traditional first order approaches to optimize $f$.

Bayesian Optimization (BO) has become a highly effective framework for black-box optimization. In general, a BO algorithm tackles this problem by modeling $f$ as a Gaussian process (GP) and by leveraging this model to query $f$ at specific inputs. The challenge of querying $f$ is to trade off exploration (\textit{i.e.}~to adequately query an input that improves the quality of the GP regression of $f$) for exploitation (\textit{i.e.}~to query an input that is thought to be the maximal argument of $f$). To achieve this trade-off at time~$t$, a BO algorithm maximizes an acquisition function $\varphi_t(\bm x)$, built by leveraging the information provided by the GP model, to select a query $\bm x^t$.

Although BO has shown its efficiency at optimizing black-box functions, so far it has mostly found success with low dimensional input spaces~\cite{wang13}. However, real-world applications, such as computer vision, robotics or networking, often involve a high-dimensional objective function $f$. Scaling classical BO algorithms to such input spaces remains a great challenge as the cost of finding $\argmax \varphi_t$ grows exponentially with the input space dimension $d$. A classical way to circumvent that issue is to cap the complexity of the maximization by assuming an additive decomposition of $f$ (e.g. see~\cite{addgpucb}) with a low \textit{Maximum Factor Size} (MFS), denoted by $\Bar{d}$ and that is the maximum number of dimensions for a factor of the decomposition. Unfortunately, assuming an additive decomposition with low MFS may lead to the optimization of a coarse approximation of $f$.

The performance of BO algorithms under the assumption of low MFS has been extensively studied (e.g. see~\cite{addgpucb, dec-hbo, NEURIPS2018_4e5046fc}), with~\cite{dec-hbo} detailing efforts to relax the assumption on MFS. On this paper, we demonstrate that it is possible to completely relax the low-MFS assumptions that limit the applicability domain of asymptotically optimal BO algorithms while still providing provable global maximization guarantees on the acquisition function. To illustrate this, we propose~\AlgLagrangian, a decentralized, message-passing, provably asymptotically optimal BO algorithm able to infer a complex additive decomposition of $f$ without any assumption regarding its MFS. As far as we know, this is the first BO algorithm to display such desirable property. Additionally, we provide an efficient way to approximate the well-known GP-UCB acquisition function~\cite{gpucb} in a decentralized context. Finally, we evaluate \AlgLagrangian~ and establish its superiority against several state-of-the-art solutions on both synthetic and real-world problems wherein the noisy objective function $f$ may or may not be decomposed into low-dimensional factors.

\section{Background} \label{sec:background}

\subsection{State of the Art}

Given a black-box, costly to evaluate objective function $f : \mathcal{D} \subset \mathbb{R}^d \rightarrow \mathbb{R}$, the goal of a BO algorithm is to find $\bm x^* = \argmax_{\bm x} f(\bm x)$ using as few queries as possible. To quantify the quality of the optimization, one can consider the immediate regret $r_t = f(\bm x^*) - f(\bm x^t)$, and attempt to minimize the cumulative regret $R_t = \sum_{i = 1}^t r_i$. A BO algorithm is said to be asymptotically optimal if $\lim_{t \to +\infty} R_t / t = 0$, which implies that the BO algorithm will asymptotically reach $\bm x^*$ and hence guarantees \textit{no-regret} performance.

A BO algorithm typically uses a GP to infer a posterior distribution for the value of $f(\bm x)$ at any point $\bm x \in \mathcal{D}$ and selects, at each time step $t$, a query $\bm x^t$. The BO algorithm bases its querying policy on the maximization of an acquisition function that quantifies the benefits of observing $f(\bm x)$ in terms of exploration and exploitation. Common acquisition functions include probability of improvement~\cite{pi}, expected improvement~\cite{ei} and upper confidence bound~\cite{ucb}. Like many other acquisition functions, the latter leads to an asymptotically optimal application to GPs, called GP-UCB~\cite{gpucb} and defined as
\begin{equation} \label{eq:gp-ucb}
    \varphi_t(\bm x) = \mu_t(\bm x) + \beta^{\frac{1}{2}}_t \sigma_t(\bm x).
\end{equation}

It involves an exploitation term $\mu_t(\bm x)$, which is the posterior mean of the GP at input $\bm x$, and an exploration term $\sigma_t(\bm x)$, which is the posterior standard deviation of the GP at input $\bm x$. The scalar $\beta^{1/2}_t$ handles the exploration-exploitation trade-off in order to guarantee the asymptotic optimality of GP-UCB with high probability. 

As stated before, scaling BO algorithms to high-dimensional functions is challenging because of the exponential complexity of the optimization algorithms used to maximize the acquisition function $\varphi_t$. To tackle this problem, BO algorithms generally fall into one of the two following categories (with the exception of TuRBO proposed by~\cite{turbo}, which uses trust regions to maximize $f$).

\textbf{Embedding} BO algorithms assume that only a few dimensions significantly impact $f$ and project the high-dimensional space of $f$ into a low-dimensional one where the optimization is actually performed. REMBO~\cite{rembo} and ALEBO~\cite{alebo} use random matrices to embed the high-dimensional space while SAASBO~\cite{saasbo} uses sparse GPs defined on subspaces and LineBO~\cite{kirschner2019adaptive} exploits successive line-searches in random directions. Other approaches such as~\cite{gomez2018automatic, moriconi2020high} are based, respectively, on Variational Auto-Encoders and on manifold GPs to learn an embedding. \cite{gupta2020trading} propose to perform the optimization in two orthogonal subspaces. Finally, some approaches select a subset of dimensions of the input space to project onto. Such recent methods include Dropout~\cite{dropout} and MCTS-VS~\cite{song2022monte}. 

\textbf{Decomposing} BO algorithms assume an additive structure for $f$ and optimize the factors of the induced decomposition. Classical approaches such as MES~\cite{wang17}, ADD-GPUCB~\cite{addgpucb} or QFF~\cite{NEURIPS2018_4e5046fc} assume a decomposition with a MFS equal to $1$ and orthogonal domains. More recent approaches like DEC-HBO~\cite{dec-hbo} are able to optimize decompositions with larger MFS and shared input components. Still, the MFS of the decomposition must be low to avoid a prohibitive computational complexity. Note that, under some assumptions on $f$, these approaches are provably asymptotically optimal and a subset of them, namely ADD-GPUCB~\cite{addgpucb} and DEC-HBO~\cite{dec-hbo}, can be used in a decentralized context. Finally, note that in a recent work, \cite{ziomek2023random} showed that, in an adversarial context, exploiting random decompositions is optimal on average.

\subsection{\AlgLagrangian~(Decentralized Message-passing Bayesian Optimization algorithm)} \label{sec:dumbo_sota}

\begin{table}[t]
\caption{Comparison of state-of-the-art decomposing BO algorithms with \AlgLagrangian~on relevant criteria. Here, $n$ is the number of factors in the decomposition, $d$ the number of dimensions of $f$, $\Bar{d}$ the MFS of the decomposition, $t$ the optimization step, $\zeta$ the desired accuracy when maximizing $\varphi_t$ and $N_A$ a constant defined in Appendix~\ref{app:algo}. $N_m$ is a constant defined in~\cite{dec-hbo}.}
\label{tab:comparison}
\centering
    \begin{tabular}{l c c c c}
        \toprule
        Solution & Complexity & MFS Assumption & Find $\argmax \varphi_t$\\
        \midrule
         ADD-GPUCB & $\mathcal{O}\left(t^3 + nt^2 + n^2\zeta^{-1}\right)$ & $\Bar{d} = 1$ & Yes\\
         QFF & $\mathcal{O}\left((\zeta^{-1}t^{3/2}(\log t)^{\Bar{d}})^{\Bar{d}}\right)$ & $\Bar{d} = 1$ & Yes\\
         DEC-HBO & $\mathcal{O}\left(N_m\zeta^{-\Bar{d}}n(t^3 + n)\right)$ & Low $\Bar{d}$ & Under assumptions\\
         \AlgLagrangian & $\mathcal{O}\left(\Bar{d} N_A nt^3 \zeta^{-1}\right)$ & None & Yes\\
        \bottomrule
    \end{tabular}
\end{table}

We propose \AlgLagrangian, a decomposing algorithm that relaxes the low MFS constraint on the assumed additive decomposition of $f$. Table~\ref{tab:comparison} gathers the main differences between \AlgLagrangian~and state-of-the-art decomposing algorithms. Note that ADD-GPUCB and QFF require the simplest form of additive decompositions (\textit{i.e.}, with $\Bar{d} = 1$). As a consequence, when optimizing a complex objective function $f$, they need to approximate it by a decomposition with MFS $\Bar{d} = 1$. In return, they are able, at each time step $t$, to query $\argmax \varphi_t$. In contrast, DEC-HBO tolerates more complex decompositions (\textit{i.e.}, with $\Bar{d} > 1$), but is no longer guaranteed to find the global maximum of $\varphi_t$ (because it uses a max-sum algorithm~\cite{rogers2011bounded} that requires $f$ to have a sparse additive decomposition to converge). Finally, \AlgLagrangian~is the only algorithm that is able to completely relax any assumption on the MFS without weakening the maximization guarantees on the acquisition function. This allows \AlgLagrangian~to be simultaneously asymptotically optimal and able to handle decompositions with an arbitrary MFS without the need to approximate them with simpler ones.

The remaining of this article is devoted to formulating the BO problem (Section~\ref{sec:prob_form}), presenting \AlgLagrangian~(Section~\ref{sec:algorithm}), providing theoretical guarantees (Section~\ref{sec:optim}) and comparing its empirical performance with state-of-the-art BO algorithms (Section~\ref{sec:num_res}).

\section{Problem Formulation and First Results} \label{sec:prob_form}

In this section, we introduce the core assumptions about the black-box objective function $f : \mathcal{D} \rightarrow \mathbb{R}$ to obtain an additive decomposition (Section~\ref{sec:obj_dec}). Next, we exploit these assumptions to derive inference formulas (Section~\ref{sec:inference_formulas}) and to adapt GP-UCB to a decentralized context (Section~\ref{sec:acq_dec}).

\subsection{Core Assumptions} \label{sec:obj_dec}

In order to optimize $f$ in a decentralized fashion, we make several assumptions.

\begin{assumption} \label{ass:dec}
The unknown objective function $f$ can be decomposed into a sum of factor functions $\left(f^{(i)}\right)_{i \in \llbracket1, n\rrbracket}$, with compact domains $\left(D^{(i)}\right)_{i \in \llbracket1, n\rrbracket}$, such that $\mathcal{D} = \cup_{i = 1}^n D^{(i)}$ and
\begin{equation} \label{eq:dec}
    f = \sum_{i = 1}^n f^{(i)}.
\end{equation}
\end{assumption}

Any decomposition, including~(\ref{eq:dec}), can be represented by a factor graph where each factor and variable node denote, respectively, one of the $n$ factors of the decomposition and one of the $d$ input components of $f$. An edge exists between a factor node $i$ and a variable node $j$ if and only if $f^{(i)}$ uses $x_j$ as an input component. We use $\mathcal{V}_i, 1 \leq i \leq n$, and $\mathcal{F}_j, 1 \leq j \leq d$, to denote respectively the set of variable nodes connected to factor node $i$ and the set of factor nodes connected to variable node $j$. Please refer to Appendix~\ref{app:factor_graph} for a detailed example regarding additive decompositions and factor graphs.


To make predictions about the factor functions without any prior knowledge, we need a model that maps the previously collected inputs with their noisy outputs. Denoting $\bm x_{\mathcal{V}_i} = \left(x_j\right)_{j \in \mathcal{V}_i}$, let us introduce the following assumption.

\begin{assumption} \label{ass:gps}
Factor functions $f^{(i)}$ are independent $\mathcal{GP}\left(\mu^{(i)}_0, k^{(i)}\left(\bm x_{\mathcal{V}_i}, \bm{x'}_{\mathcal{V}_i}\right)\right)$, with prior mean $\mu^{(i)}_0 = 0$ and covariance function $k^{(i)}$.
\end{assumption}

Since $f$ is a sum of independent GPs, Assumption~\ref{ass:gps} implies that $f$ is also $\mathcal{GP}\left(\mu_0, k(\bm x, \bm{x'})\right)$ with prior mean $\mu_0 = 0$ and covariance function $k(\bm x, \bm{x'}) = \sum_{i = 1}^n k^{(i)}\left(\bm x_{\mathcal{V}_i}, \bm{x'}_{\mathcal{V}_i}\right)$.

Finally, to ensure the no-regret property of \AlgLagrangian~(see Section~\ref{sec:optim}), we introduce the following assumption on each $k^{(i)}$.

\begin{assumption} \label{ass:hessian_bounded}
For any $i \in \llbracket1, n\rrbracket$, $k^{(i)}$ is an $L$-Lipschitz, twice differentiable function on $\mathcal{D}^{(i)}$. Furthermore, it exists $H > 0$ such that, for any $\bm x, \bm x' \in \mathcal{D}^{(i)}$ we have
\begin{equation}
    ||\nabla^2 k^{(i)}(\bm x, \bm x')||_2 \leq H.
\end{equation}
\end{assumption}

Note that Assumption~\ref{ass:hessian_bounded} is mild: a large class of covariance functions satisfy it, such as the Matérn class (with $\nu \geq 5/2$), the squared-exponential function or even the rational quadratic function. Please refer to~\cite{williams2006gaussian} for details on these covariance functions.

\subsection{Inference Formulas} \label{sec:inference_formulas}

For any $\bm x \in \mathcal{D}$ and given the previous $t$ input queries $(\bm x^1, \cdots, \bm x^t)$, the vector $(f(\bm x), f(\bm x^1), \cdots, f(\bm x^t))^\top$ is Gaussian. Given the $t$-dimensional vector of noisy outputs $\bm y = (y_1, \cdots, y_t)^\top$, with $y_i = f(\bm x^i) + \epsilon$ and $\epsilon$ a centered Gaussian variable of variance $\sigma^2$, the posterior distribution of the factor $f^{(i)}(\bm x)$ is also Gaussian. Since $f$ can be decomposed, the posterior mean $\mu^{(i)}_{t+1}(\bm x_{\mathcal{V}_i})$ and variance $(\sigma^{(i)}_{t+1}(\bm x_{\mathcal{V}_i}))^2$ of the factor $f^{(i)}$ at time $t+1$ can be expressed with the posterior means and covariance functions of the factor functions involved in decomposition~(\ref{eq:dec}).

\begin{proposition} \label{prop:inference}
    Let $\mu^{(i)}_{t+1}(\bm x_{\mathcal{V}_i})$ and $(\sigma_{t+1}^{(i)}(\bm x_{\mathcal{V}_i}))^2$ be the posterior mean and variance of $f^{(i)}$ at input $\bm x_{\mathcal{V}_i}$, respectively. Then, for the decomposition~(\ref{eq:dec}),
    \begin{align}
        \mu_{t+1}^{(i)}(\bm x_{\mathcal{V}_i}) &= \bm k^{(i) \top}_{\bm x_{\mathcal{V}_i}} \left(\bm K + \sigma^2\bm I\right)^{-1} \bm y \label{eq:f_post_mean}\\
        (\sigma_{t+1}^{(i)}(\bm x_{\mathcal{V}_i}))^2 &= k^{(i)}(\bm x_{\mathcal{V}_i}, \bm x_{\mathcal{V}_i}) - \bm k^{(i) \top}_{\bm x_{\mathcal{V}_i}} \left(\bm K + \sigma^2 \bm I\right)^{-1} \bm k^{(i)}_{\bm x_{\mathcal{V}_i}} \label{eq:f_post_var}
    \end{align}
    with $t \times 1$ vectors $\bm k_{\bm x_{\mathcal{V}_i}}^{(i)} = (k^{(i)}(\bm x_{\mathcal{V}_i}, \bm x^j_{\mathcal{V}_i}))_{j \in \llbracket1, t\rrbracket}$, $t \times t$ matrix $\bm K = (k(\bm x^j, \bm x^k))_{j,k \in \llbracket1, t\rrbracket}$ and $\bm I$ the $t \times t$ identity matrix.
\end{proposition}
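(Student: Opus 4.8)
The plan is to obtain~(\ref{eq:f_post_mean})--(\ref{eq:f_post_var}) as an instance of the standard Gaussian conditioning identity, the only nonstandard ingredient being the way the decomposition $f = \sum_i f^{(i)}$ shapes the relevant covariances. First I would observe that, by Assumption~\ref{ass:gps}, the factor $f^{(i)}(\bm x_{\mathcal{V}_i})$ together with the noisy observations $y_1,\dots,y_t$ forms a jointly Gaussian, zero-mean vector (zero mean because every prior mean $\mu_0^{(i)}$ vanishes). It therefore suffices to identify its $2\times 2$ block covariance $\Sigma = \begin{pmatrix} \Sigma_{11} & \Sigma_{12} \\ \Sigma_{21} & \Sigma_{22}\end{pmatrix}$, after which the posterior mean and variance of $f^{(i)}$ given $\bm y$ read off mechanically as $\Sigma_{12}\Sigma_{22}^{-1}\bm y$ and $\Sigma_{11} - \Sigma_{12}\Sigma_{22}^{-1}\Sigma_{21}$ (the mean requires no centering term precisely because all prior means are zero).

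The central step, and the one that genuinely invokes the decomposition hypothesis, is the cross-covariance $\Sigma_{12}$ between $f^{(i)}(\bm x_{\mathcal{V}_i})$ and each observation $y_j = f(\bm x^j) + \epsilon = \sum_{m=1}^n f^{(m)}(\bm x^j_{\mathcal{V}_m}) + \epsilon$. Expanding by bilinearity of $\Cov(\cdot,\cdot)$ and using the mutual independence of the factor GPs asserted in Assumption~\ref{ass:gps} (together with the independence of the noise $\epsilon$), every term $\Cov(f^{(i)}, f^{(m)})$ with $m\neq i$ collapses to zero, leaving only $\Cov\!\left(f^{(i)}(\bm x_{\mathcal{V}_i}), f^{(i)}(\bm x^j_{\mathcal{V}_i})\right) = k^{(i)}(\bm x_{\mathcal{V}_i}, \bm x^j_{\mathcal{V}_i})$. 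Stacking over $j \in [1,t]$ yields exactly $\Sigma_{12} = \bm k^{(i)\top}_{\bm x_{\mathcal{V}_i}}$. The prior variance is $\Sigma_{11} = k^{(i)}(\bm x_{\mathcal{V}_i}, \bm x_{\mathcal{V}_i})$, again by Assumption~\ref{ass:gps}, whereas the observation covariance $\Sigma_{22} = \bm K$ is the full Gram matrix built from $k = \sum_m k^{(m)}$ (with the noise variance folded into its diagonal), since each $y_j$ observes the entire sum $f$ rather than a single factor.

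Substituting $\Sigma_{11}$, $\Sigma_{12}$ and $\Sigma_{22}$ into the conditioning formula immediately delivers~(\ref{eq:f_post_mean}) and~(\ref{eq:f_post_var}). I expect the only real difficulty to be bookkeeping rather than mathematical depth: one must keep track of the fact that $f^{(i)}$ correlates with the data \emph{only} through its own component, so the cross-covariance vector carries the factor kernel $k^{(i)}$ while the matrix to be inverted carries the full kernel $k$. This asymmetry between $\bm k^{(i)}_{\bm x_{\mathcal{V}_i}}$ and $\bm K$ in the statement is thus not an oversight but a direct consequence of the independence structure, and everything else reduces to routine linear algebra on Gaussian vectors.
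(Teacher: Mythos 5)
Your proof is correct and is precisely the argument the paper implicitly relies on: Proposition~\ref{prop:inference} is stated without proof, and the analogous formulas in Appendix~\ref{app:inference_forms} are described as ``simple instances of the conditioned Gaussian distribution formulas,'' which is exactly your route --- form the zero-mean joint Gaussian vector $\left(f^{(i)}(\bm x_{\mathcal{V}_i}), y_1, \dots, y_t\right)$, use the independence of the factor GPs from Assumption~\ref{ass:gps} to collapse the cross-covariance to $\bm k^{(i)}_{\bm x_{\mathcal{V}_i}}$ while the observation block carries the full kernel $k = \sum_m k^{(m)}$, and read off~(\ref{eq:f_post_mean})--(\ref{eq:f_post_var}) from the standard conditioning identity. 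Your side remark that the observation-noise variance must be folded into the diagonal of $\bm K$ is also the correct reading of the paper's slightly loose notation, which writes $\bm K$ without an explicit noise term.
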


For the sake of generality, Proposition~\ref{prop:inference} only requires an additive decomposition of $f$. Appendix~\ref{app:add_dec} describes how such a decomposition can be inferred from data, using the method proposed by~\cite{gardner-discovering:2017}, similarly to Appendix~B of~\cite{dec-hbo}. Note that Proposition~\ref{prop:inference} does \textit{not} assume a corresponding additive decomposition of the observed outputs in $\bm y$. However a large class of real-world applications naturally come up with such an additive output decomposition (e.g., network throughput maximization~\cite{bardou-inspire:2022}, energy consumption minimization~\cite{bourdeau-modeling:2019} or UAVs-related applications~\cite{xie-throughput:2018}). As demonstrated by~\cite{decomposed_feedback}, having access to a decomposed output can only improve the predictive performance of the GP surrogate model. Therefore, we derive the inference formulas to handle the case where the output decomposition is known in Appendix~\ref{app:inference_forms}. Also, we explore the benefits of having access to the decomposed output of $f$ in Section~\ref{sec:num_res}.

\subsection{Proposed Acquisition Function} \label{sec:acq_dec}

Having defined a surrogate model for $f$, we can now turn to finding an optimal policy for querying the objective function. In this section, we exploit the decomposition of $f$ and its associated factor graph (see Appendix~\ref{app:factor_graph}) to build an acquisition function for our BO algorithm that approximates GP-UCB in a decentralized context. Proofs for all the presented results can be found in Appendix~\ref{app:approx_explore}.

Recall that GP-UCB is defined by~(\ref{eq:gp-ucb}) as the sum of an exploitation term $\mu_t(\bm x)$ and an exploration term $\sigma_t(\bm x)$ weighted by some scalar $\beta_t^{1/2}$. Finding an additive decomposition for GP-UCB is hard, because this additive decomposition allows $\mu_t(\bm x)$ to be expressed as a sum, but not $\sigma_t(\bm x)$. To circumvent this caveat, \cite{addgpucb}~proposed to apply GP-UCB to each factor of the additive decomposition of $f$, with $\varphi^{(i)}_t = \mu^{(i)}_t + \beta_t^{1/2} \sigma_t^{(i)}$. Then, they proved that their algorithm ADD-GPUCB offers no-regret performance by taking $\sum_{i = 1}^n \varphi_t^{(i)} = \mu_t + \beta_t^{1/2} \sum_{i = 1}^n \sigma_t^{(i)}$ as the acquisition function. Although the exploitation term $\mu_t$ is preserved, the exploration term is now overweighted since $\sum_{i = 1}^n \sigma_t^{(i)} \geq \sqrt{\sum_{i = 1}^n \left(\sigma_t^{(i)}\right)^2} = \sigma_t$. To reach better empirical performance, one could look for a tighter additive upper bound of $\sigma_t^2$. This is the purpose of this section.

In a given factor graph, a factor node $i$ can access information about another factor node if they share a common variable node $j$ (see Figure~\ref{fig:factor_graph_mp} in Appendix~\ref{app:factor_graph}). We gather all the indices of the factor nodes that share at least one variable node with the factor node $i$ in $\mathcal{N}_i = \cup_{j \in \mathcal{V}_i} \mathcal{F}_j$. Then, we propose the following approximation for $\sigma_t(\bm x)$:
\begin{equation} \label{eq:approx_acq}
    \sum_{i = 1}^n \sqrt{\sum_{k \in \mathcal{N}_i} \frac{\left(\sigma^{(k)}_t(\bm x_{\mathcal{V}_k})\right)^2}{|\mathcal{N}_k|^2} }.
\end{equation}

On the one hand, this approximation is exact and equal to $\sigma_t(\bm x)$ for a complete factor graph (\textit{i.e.}, $\forall i \in \llbracket 1, n\rrbracket, |\mathcal{N}_i| = n$). On the other hand, given a decomposition made only of one-dimensional factors with orthogonal domains (\textit{i.e.}, $\forall i \in \llbracket1, n\rrbracket, |\mathcal{N}_i| = 1$), it boils down to the approximation proposed by~\cite{addgpucb}, that is, $\sum_{i = 1}^n \sigma^{(i)}_t(\bm x_{\mathcal{V}_i})$. A benefit of approximation~(\ref{eq:approx_acq}) is to better exploit the structure of the factor graph. Indeed, the following result shows that~(\ref{eq:approx_acq}) is a tighter upper bound of $\sigma_t(\bm x)$ than the one proposed in~\cite{addgpucb}.

\begin{theorem} \label{thm:better_approx}
Let Assumptions~\ref{ass:dec} and~\ref{ass:gps} hold. Then, for any factor graph and any $\bm x \in \mathcal{D}$, 
\begin{equation} \label{eq:better_approx}
    \sigma_t(\bm x) \leq \sum_{i = 1}^n \sqrt{\sum_{k \in \mathcal{N}_i} \frac{\left(\sigma^{(k)}_t(\bm x_{\mathcal{V}_k})\right)^2}{|\mathcal{N}_k|^2} } \leq \sum_{i = 1}^n \sigma_t^{(i)}(\bm x_{\mathcal{V}_i}).
\end{equation}
\end{theorem}

From the bounds of Theorem~\ref{thm:better_approx}, one can expect the acquisition function
\begin{equation} \label{eq:better_acq}
    \varphi_t(\bm x) = \mu_t(\bm x) + \beta_t^{\frac{1}{2}} \sum_{i = 1}^n \sqrt{\sum_{k \in \mathcal{N}_i} \frac{\left(\sigma^{(k)}_t(\bm x_{\mathcal{V}_k})\right)^2}{|\mathcal{N}_k|^2} }
\end{equation}
to be less prone to over-exploration, and hence that a BO algorithm using (\ref{eq:better_acq}) as acquisition function to behave more like GP-UCB than ADD-GPUCB or DEC-HBO. Note that~(\ref{eq:better_acq}) has a natural additive decomposition $\sum_{i = 1}^n \varphi_t^{(i)}(\bm x_{\mathcal{V}_i})$ with
\begin{align}
    \varphi_t^{(i)}(\bm x_{\mathcal{V}_i}) &= \mu_t^{(i)}(\bm x_{\mathcal{V}_i}) + \beta_t^{\frac{1}{2}} \sqrt{\frac{\left(\sigma^{(i)}_t(\bm x_{\mathcal{V}_i})\right)^2}{|\mathcal{N}_i|^2} + c_i} \label{eq:local_acq_f}
\end{align}
where $c_i = \sum_{\substack{k \in \mathcal{N}_i\\k \neq i}} \frac{\left(\sigma^{(k)}_t(\bm x_{\mathcal{V}_k})\right)^2}{|\mathcal{N}_k|^2}$ can be computed by the factor node $i$ thanks to message-passing with its associated variable nodes in $\mathcal{V}_i$.

\section{\AlgLagrangian} \label{sec:algorithm}

In this section, we describe \AlgLagrangian, a BO algorithm that exploits the results from Section~\ref{sec:prob_form} to find $\argmax_{\bm x \in \mathcal{D}} \sum_{i = 1}^{n} \varphi_t^{(i)}(\bm x_{\mathcal{V}_i})$. Optimizing $\varphi_t(\bm x) = \sum_{i = 1}^{n} \varphi_t^{(i)}(\bm x_{\mathcal{V}_i})$ while ensuring the consistency between shared input components is equivalent to solving the constrained optimization problem


\begin{align}
    \max \sum_{i = 1}^{n} \varphi_t^{(i)}\left(\bm x^{(i)}\right) \text{such that } \bm x^{(i)}_{\mathcal{V}_i \cap \mathcal{V}_j} = \bm x^{(j)}_{\mathcal{V}_i \cap \mathcal{V}_j}, \forall i,j \in \left\llbracket1, n\right\rrbracket \label{eq:constraints}
\end{align}
with $\bm x^{(1)}, \cdots, \bm x^{(n)}$ being the inputs (whose dimension indices are respectively listed in $\mathcal{V}_1, \cdots, \mathcal{V}_{n}$) of the factor functions $\varphi_t^{(1)}, \cdots, \varphi_t^{(n)}$. 

To simplify the expression of the equality constraints given in~(\ref{eq:constraints}), we introduce a global consensus variable $\bm{\Bar{x}} \in \mathcal{D}$ and we reformulate the optimization problem as
\begin{equation} \label{eq:optimization_prob}
    \begin{aligned}
        \max \sum_{i = 1}^{n} \varphi_t^{(i)}\left(\bm x^{(i)}\right) \text{such that } \bm x^{(i)} = \bm{\Bar{x}}_{\mathcal{V}_i}, \forall i \in \left\llbracket1, n\right\rrbracket.
    \end{aligned}
\end{equation}

We now turn the problem~(\ref{eq:optimization_prob}) into an unconstrained optimization problem by considering its augmented Lagrangian $\mathcal{L}_{\eta}(\bm x^{(1)}, \cdots, \bm x^{(n)}, \bm{\Bar{x}}, \bm \lambda)$:

\begin{align} \label{eq:lagrangian}
    \mathcal{L}_{\eta} = \sum_{i = 1}^{n} \varphi_t^{(i)}(\bm x^{(i)}) - \bm \lambda^{(i)\top} (\bm x^{(i)} - \bm{\Bar{x}}_{\mathcal{V}_i}) - \frac{\eta}{2} ||\bm x^{(i)} - \bm{\Bar{x}}_{\mathcal{V}_i}||_2^2
\end{align}
with $\bm \lambda_k^{(i)}$ a column vector of dual variables with $|\mathcal{V}_i|$ components and a hyperparameter $\eta > 0$, which can be set dynamically following the procedure detailed in~\cite{admm_details}.

To maximize~(\ref{eq:lagrangian}), we consider the Alternating Direction Method of Multipliers (ADMM), proposed by~\cite{admm}. We now describe how we apply ADMM to our problem and present some relevant well-known results. For further details, please refer to~\cite{admm_details}.

ADMM is an iterative method that proposes, at iteration~$k$, to solve sequentially the problems

\begin{align}
    \bm x^{(1)}_{k+1} &= \argmax_{\bm x^{(1)}} \mathcal{L}(\bm x^{(1)}, \cdots, \bm x^{(n)}_k, \bm{\Bar{x}}_k, \bm \lambda_k) \nonumber\\
    \vdots &\nonumber\\
    \bm x^{(n)}_{k+1} &= \argmax_{\bm x^{(n)}} \mathcal{L}(\bm x^{(1)}_{k+1}, \cdots, \bm x^{(n-1)}_{k+1}, \bm x^{(n)}, \bm{\Bar{x}}_k, \bm \lambda_k) \nonumber\\
    \bm{\Bar{x}}_{k+1} &= \argmax_{\bm{\Bar{x}}} \mathcal{L}(\bm x^{(1)}_{k+1}, \cdots, \bm x^{(n)}_{k+1}, \bm{\Bar{x}}, \bm \lambda_k) \label{eq:argmax_consensus}\\
    \bm{\lambda}_{k+1} &= \argmax_{\bm{\lambda}} \mathcal{L}(\bm x^{(1)}_{k+1}, \cdots, \bm x^{(n)}_{k+1}, \bm{\Bar{x}}_{k+1}, \bm \lambda). \label{eq:argmax_dual}
\end{align}

Note that $\bm x^{(1)}_{k+1}, \cdots, \bm x^{(n)}_{k+1}$ can be found concurrently by each factor node of the factor graph of $f$. We propose to proceed by gradient ascent (e.g. with ADAM~\cite{adam}) of

\begin{align}
    \mathcal{L}^{(i)}_{\eta} = \varphi_t^{(i)}(\bm x^{(i)}) - \bm \lambda^{(i)\top} (\bm x^{(i)} - \bm{\Bar{x}}_{\mathcal{V}_i}) - \frac{\eta}{2} ||\bm x^{(i)} - \bm{\Bar{x}}_{\mathcal{V}_i}||_2^2. \label{eq:local_lag}
\end{align}

Next, each factor node $i$ sends $\left(\bm x_{k+1}^{(i)}, \frac{\left(\sigma^{(i)}_t(\bm x_{k+1}^{(i)})\right)^2}{|\mathcal{N}_i|^2}\right)$ to its variable nodes in $\mathcal{V}_i$. Each variable node $j$ uses the received data to compute~(\ref{eq:argmax_consensus}) and~(\ref{eq:argmax_dual}). In fact, if $\forall i \in \llbracket1, n\rrbracket$, $\sum_{j \in \mathcal{F}_i} \lambda^{(j)}_{0,i} = 0$, it is known (see~\cite{admm_details}) that the closed-forms for~(\ref{eq:argmax_consensus}) and~(\ref{eq:argmax_dual}) are

\begin{align}
    \bm{\Bar{x}}_{k+1} &= \left(\frac{1}{|\mathcal{F}_i|} \sum_{j \in \mathcal{F}_i} x^{(j)}_{k+1,i}\right)_{i \in \llbracket1, d\rrbracket} \label{eq:consensus_final_form}\\
    \bm \lambda_{k+1} &= \left(\bm \lambda^{(i)}_k + \eta \left(\bm x^{(i)}_{k+1} - \bm{\Bar{x}}_{k+1,\mathcal{V}_i}\right)\right)_{i \in \llbracket1, n\rrbracket}. \label{eq:lambda_closed_form}
\end{align}

Finally, each variable node $j$ sends $\left(\bm \lambda_{k+1}^{(i)}, \Bar{x}_{k+1, j}\right)$ as well as $\left(\frac{\left(\sigma^{(l)}_t(\bm x_{k+1}^{(l)})\right)^2}{|\mathcal{N}_l|^2}\right)_{l \in \mathcal{F}_j}$ to its factor node $i$, $i \in \mathcal{F}_j$. This allows each factor node $i$ to update its dual variables $\bm \lambda^{(i)}$ as well as the value of term $c_i$ in (\ref{eq:local_acq_f}).

These results describe a fully decentralized message-passing algorithm, called \AlgLagrangian, which can run on the factor graph of $f$. The detailed algorithm (Algorithm~\ref{alg:algo}), as well as a discussion about its time complexity, are provided in Appendix~\ref{app:algo}. Since \AlgLagrangian~relies on ADMM to maximize $\varphi_t$, let us briefly discuss its maximization guarantees. It is well known that ADMM converges towards the global maximum of a convex $\varphi_t$. ADMM has also demonstrated very good performance at optimizing non-convex functions~\cite{liavas2015parallel, lai2014splitting, chartrand2013nonconvex}. This is explained by recent works such as \cite{admm_conv}, which extends the global maximization guarantee of ADMM to the class $\mathcal{C}$ of \textit{restricted prox-regular} functions that satisfy the Kurdyka-Lojasiewicz condition. We demonstrate that the acquisition function $\varphi_t$ simultaneously satisfies these conditions in the next section.

\section{Asymptotic Optimality} \label{sec:optim}

In this section, we demonstrate the asymptotic optimality of \AlgLagrangian. First, we prove that, at each iteration, ADMM is always able to globally maximize the acquisition function $\varphi_t$ with Theorem~\ref{thm:global_convergence_admm}. Then, we demonstrate that \AlgLagrangian~has a lower immediate regret than another asymptotically optimal BO algorithm with Theorem~\ref{thm:immediate_regret}. With these two theorems, we establish the asymptotical optimality of \AlgLagrangian, stated in Corollary~\ref{cor:asymptotic_optimality}.

Let us start with the global maximization guarantee of ADMM, whose proof can be found in Appendix~\ref{app:admm_global_conv}.

\begin{theorem} \label{thm:global_convergence_admm}
    Under Assumption~\ref{ass:hessian_bounded}, $\forall i \in \llbracket1, n\rrbracket, \varphi_t^{(i)}$ (see~(\ref{eq:local_acq_f})) is a restricted-prox regular function. Furthermore, the augmented Lagrangian $\mathcal{L}_\eta$ (see~(\ref{eq:lagrangian})) is a Kurdyka-Lojasiewicz (KL) function. 
\end{theorem}

Now that we have the guarantee that $\varphi_t$ is always maximized, we can properly establish the asymptotic optimality of \AlgLagrangian. We start by providing an upper bound on its immediate regret $r_t = f(\bm x^*) - f(\bm x_t)$ for a finite, discrete domain $\mathcal{D}$. Its proof can be found in Appendix~\ref{app:algo_optimal}.

\begin{theorem} 
    Let $r_t = f(\bm x^*) - f(\bm x^t)$ denote the immediate regret of \AlgLagrangian. Let $\delta \in (0, 1)$ and $\beta_t = 2 \log\left(\frac{|\mathcal{D}| \pi^2 t^2}{6\delta}\right)$. Then $\forall t \in \mathbb{N}$, with probability at least $1 - \delta$,
    \begin{equation} \label{eq:immediate-regret}
        r_t \leq 2 \beta_t^{\frac{1}{2}}\sum_{i = 1}^n \sqrt{\sum_{k \in \mathcal{N}_i} \frac{\left(\sigma^{(k)}_t(\bm x_{\mathcal{V}_k})\right)^2}{|\mathcal{N}_k|^2} },
    \end{equation}
 \label{thm:immediate_regret}   
\end{theorem}

We demonstrate the asymptotic optimality of \AlgLagrangian~by piggybacking on the asymptotic optimality of DEC-HBO~\cite{dec-hbo}. The latter is a decomposing BO algorithm with an immediate regret bound of $2 \beta_t^{1/2}\sum_{i = 1}^n \sigma^{(i)}_t(\bm x^t)$ over a finite, discrete domain (see Theorem~1 in~\cite{dec-hbo}). Interestingly, Theorem~\ref{thm:better_approx} directly implies that the immediate regret bound~(\ref{eq:immediate-regret}) is lower than the immediate regret bound of DEC-HBO. As a consequence, the immediate regret of \AlgLagrangian~is bounded above by the regret bound of DEC-HBO. This allows us to rely on proofs in~\cite{dec-hbo} to establish some properties of \AlgLagrangian. In particular, DEC-HBO is provably asymptotically optimal whether the domain $\mathcal{D}$ is discrete or continuous (see Theorems~2 and~3 in~\cite{dec-hbo}). These results rely on their immediate regret bound over a finite, discrete domain. Hence, they directly apply to \AlgLagrangian~as well and yield the following corollary.

\begin{corollary} \label{cor:asymptotic_optimality}
    Let $\delta \in (0, 1)$ and $R_t = \sum_{k = 1}^t r_k$ denote the cumulative regret of \AlgLagrangian. Then, with probability at least $1 - \delta$, there exists a monotonically increasing sequence $\left\{\beta_t\right\}_t$ such that $\beta_t \in \mathcal{O}(\log t)$ and $\lim_{t \to +\infty} R_t / t = 0$.
\end{corollary}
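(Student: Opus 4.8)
The plan is to piggyback on the asymptotic optimality of DEC-HBO, exploiting the fact that the immediate regret of \AlgLagrangian~is dominated by that of DEC-HBO. First I would start from Theorem~\ref{thm:immediate_regret}, which guarantees that, with probability at least $1 - \delta$ and for all $t \in \mathbb{N}$, the immediate regret satisfies $r_t \leq 2\beta_t^{1/2}\bigl(a\sum_{i=1}^n (\sigma_t^{(i)}(\bm x^t))^2 + \tfrac{1}{4a}\bigr)$ for the choice $\beta_t = 2\log(|\mathcal{D}|\pi^2 t^2 / (6\delta))$. The crucial preliminary observation is that this $\beta_t$ is monotonically increasing in $t$ and belongs to $\mathcal{O}(\log t)$, so it already meets the two structural requirements stated in the corollary.

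Next I would apply Theorem~\ref{thm:better_approx}, which gives $a\sum_{i=1}^n (\sigma_t^{(i)}(\bm x^t))^2 + \tfrac{1}{4a} \leq \sum_{i=1}^n \sigma_t^{(i)}(\bm x^t)$. Substituting this into the bound of Theorem~\ref{thm:immediate_regret} yields $r_t \leq 2\beta_t^{1/2}\sum_{i=1}^n \sigma_t^{(i)}(\bm x^t)$, which is exactly the immediate regret bound established for DEC-HBO in Theorem~1 of~\cite{dec-hbo}. In particular, the additive constant $\tfrac{1}{4a}$ is absorbed by Theorem~\ref{thm:better_approx} and therefore does not contribute a non-vanishing term when regrets are summed, which is the step that makes the whole argument go through.

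Summing over $k = 1, \dots, t$ on the same high-probability event gives $R_t = \sum_{k=1}^t r_k \leq \sum_{k=1}^t 2\beta_k^{1/2}\sum_{i=1}^n \sigma_k^{(i)}(\bm x^k)$, which is precisely the cumulative regret bound analysed for DEC-HBO. I would then invoke Theorems~2 and~3 of~\cite{dec-hbo}, whose proofs bound this right-hand side via Cauchy--Schwarz and the sublinear growth of the per-factor maximum information gain $\gamma_t$ (valid under Assumptions~\ref{ass:dec} and~\ref{ass:gps} for the Matérn and squared-exponential kernels of Assumption~\ref{ass:k_lipschitz}). Since those proofs depend only on the immediate regret admitting the upper bound $2\beta_t^{1/2}\sum_{i=1}^n \sigma_t^{(i)}(\bm x^t)$, which we have just shown holds for \AlgLagrangian, they transfer and conclude that $\lim_{t\to\infty} R_t/t = 0$ with probability at least $1 - \delta$.

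The main obstacle lies not in the chain of inequalities, which is immediate, but in verifying that the hypotheses of the DEC-HBO optimality theorems are genuinely inherited by \AlgLagrangian: one must confirm that the $\beta_t$ of Theorem~\ref{thm:immediate_regret} can serve as the monotone $\mathcal{O}(\log t)$ sequence those theorems require, and that the information-gain argument used to make $R_t/t$ vanish applies under our additive decomposition with no MFS constraint. The latter holds because $\gamma_t$ is controlled factor by factor and each factor has finite (even if large) dimension, so its sublinear-in-$t$ behaviour is preserved; once this is settled, the ratio $R_t/t = \mathcal{O}(\sqrt{\beta_t \gamma_t / t}) \to 0$ follows directly.
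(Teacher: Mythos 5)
Your proposal follows exactly the paper's own argument: it piggybacks on DEC-HBO by combining Theorem~\ref{thm:immediate_regret} with Theorem~\ref{thm:better_approx} to show that \AlgLagrangian's immediate regret is dominated by DEC-HBO's bound $2\beta_t^{1/2}\sum_{i=1}^n \sigma_t^{(i)}(\bm x^t)$, and then transfers Theorems~2 and~3 of~\cite{dec-hbo} to conclude $\lim_{t\to\infty} R_t/t = 0$. The additional checks you flag (monotonicity and $\mathcal{O}(\log t)$ growth of $\beta_t$, sublinearity of the information gain) are sound elaborations of the same route, so the proof is correct and essentially identical to the paper's.
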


\section{Performance Experiments} \label{sec:num_res}

In this section, we detail the experiments carried out to evaluate the empirical performance of \AlgLagrangian. An open-source implementation of \AlgLagrangian, based on BoTorch~\cite{balandat2020botorch}, is available on GitHub\footnote{\url{https://github.com/abardou/dumbo}}.

Our benchmark comprises four synthetic functions and three real-world experiments. We consider two state-of-the-art decomposing BO algorithms: ADD-GPUCB~\cite{addgpucb} that assumes $\Bar{d} = 1$, and DEC-HBO~\cite{dec-hbo} for which, similarly to its authors in their empirical evaluation, we assume $\Bar{d} \leq 3$. We also consider four state-of-the-art BO algorithms that do not assume an additive decomposition of the objective function: TuRBO~\cite{turbo}, SAASBO~\cite{saasbo}, LineBO~\cite{kirschner2019adaptive} and MS-UCB~\cite{gupta2020trading} with its hyperparameter $\alpha = 0$. We compare these eight algorithms with two versions of the proposed algorithm: \AlgLagrangian~that must systematically infer an additive decomposition of $f$ (see Appendix~\ref{app:add_dec}) and ADD-\AlgLagrangian~that, conversely, can observe the true decomposition of $f$ if it exists (see Appendix~\ref{app:inference_forms}). Finally, note that we chose a Matérn kernel (with its hyperparameter $\nu = 5/2$) for each GP involved in these experiments.

Since BO is often used in the optimization of expensive black-box functions, we are interested in the ability of each algorithm at obtaining good performance in a small number of iterations. Also, to strengthen our results, each experiment is replicated 5 independent times. Table~\ref{tab:results} gathers the averaged results that were obtained. Additionally, we made wall-clock time measurements on some experiments and we discuss them in Appendix~\ref{app:wc_time}.

\begin{table}[t]
\caption{Comparison of eight state-of-the-art solutions against two different versions of \AlgLagrangian~on synthetic and real-world problems. Decomposing BO algorithms can be identified with the prefix "(+)". The reported metrics are the minimal regret attained for the synthetic functions, and the average negative reward for the real-world problems. The significantly best performance metrics among all the strategies are written in \textbf{bold text}, and the significantly best among the strategies that do not have access to the additive decomposition are \underline{underlined}.}
\label{tab:results}
\centering
    \begin{tabular}{l c c c c c c c c}
        \toprule
        \multirow{3}{*}{Algorithm} & \multicolumn{4}{c}{Synthetic Functions} & & \multicolumn{3}{c}{Real-World Problems}\\
        & \multicolumn{4}{c}{($d$-$\Bar{d}$)} & & \multicolumn{3}{c}{($d$-$\Bar{d}$)}\\
        \cmidrule(r){2-5} \cmidrule(r){7-9}
        & SHC & Hartmann & Powell & Rastrigin & & Cosmo & WLAN & Rover\\
        \textit{Unknown Add. Dec.} & ($2$-$2$) & ($6$-$6$) & ($24$-$4$) & ($100$-$5$) & &  ($9$-) & ($12$-$6$) & ($60$-)\\
        \midrule
         SAASBO & 0.013 & 0.89 & 3,901 & 1,073 & & 16.55 & -116.40 & 10.82\\
         TuRBO & 0.322 & 1.89 & 667 & 1,109 & & \underline{\textbf{5.82}} & -118.39 & \underline{\textbf{7.01}}\\
         LineBO & 0.016 & 0.69 & 4,830 & 1,388 & & \underline{\textbf{5.90}} & -118.68 & 8.24\\
         MS-UCB & 0.012 & 0.80 & 22,271 & 1,455 & & \underline{\textbf{5.87}} & -117.95 & 7.65\\
         (+) ADD-GPUCB & 0.102 & 1.29 & 11,760 & {\color{gray} N/A} & & 7.46 & -119.05 & 26.57\\
         (+) DEC-HBO & \underline{\textbf{0.005}} & 1.47 & 7,937 & {\color{gray} N/A} & & 14.90 & -116.58 & 10.07\\
         (+) \AlgLagrangian & \textbf{\underline{0.006}} & \textbf{\underline{0.54}} & \underline{496} & \underline{986} & & \underline{\textbf{5.86}} & \underline{-120.67} & \textbf{\underline{6.38}}\\
         \textit{Known Add. Dec.} & & & & & & & &\\
         \midrule
         (+) ADD-\AlgLagrangian & 0.009 & \textbf{0.53} & \textbf{469} & \textbf{678} & & {\color{gray} N/A} & \textbf{-121.11} & {\color{gray} N/A}\\
        \bottomrule
    \end{tabular}
\end{table}

\subsection{Optimizing Synthetic Functions} \label{sec:synth_func}

In this section, we compare the eight BO algorithms mentioned above using four synthetic functions: the 2d Six-Hump Camel (SHC), the 6d Hartmann, the 24d Powell and the 100d Rastrigin. A detailed description of the synthetic functions, as well as the complete set of figures depicting the performance of the BO algorithms can be found in Appendix~\ref{app:synthetic}.

Figure~\ref{fig:powell} reports the minimal regrets of the algorithms on the Powell function, where $d = 24$ and the MFS $\Bar{d} = 4$. Observe that the two decomposing algorithms, ADD-GPUCB and DEC-HBO, obtain the worst minimal regrets. This is because they infer an additive decomposition of $f$ based on an assumption on the MFS, that is $\Bar{d} \leq 3$ when actually $\Bar{d} = 4$. Conversely, \AlgLagrangian, which does not make any restrictive assumption on $\Bar{d}$, manages to rapidly achieve a low regret by inferring an efficient additive decomposition of $f$. \AlgLagrangian~also outperforms SAASBO, TuRBO, LineBO and MS-UCB. Finally, Figure~\ref{fig:powell} shows that, when given access to the true additive decomposition of $f$, ADD-\AlgLagrangian~achieves its lowest regret in a lower number of iterations. Similar results were obtained with the optimization of other synthetic functions (see Appendix~\ref{app:synthetic}). Finally, note that among all the BO algorithms tested in the experiments, the two versions of \AlgLagrangian~are the only ones able to properly infer and/or exploit the additive decomposition of $f$ given its large MFS.

\begin{figure}[t]
    \centering
    \subfigure[]{\includegraphics[height=3.4cm]{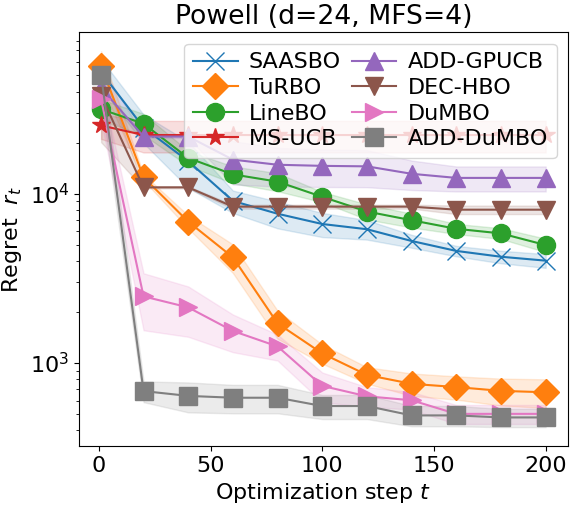} \label{fig:powell}}
    \subfigure[]{\includegraphics[height=3.4cm]{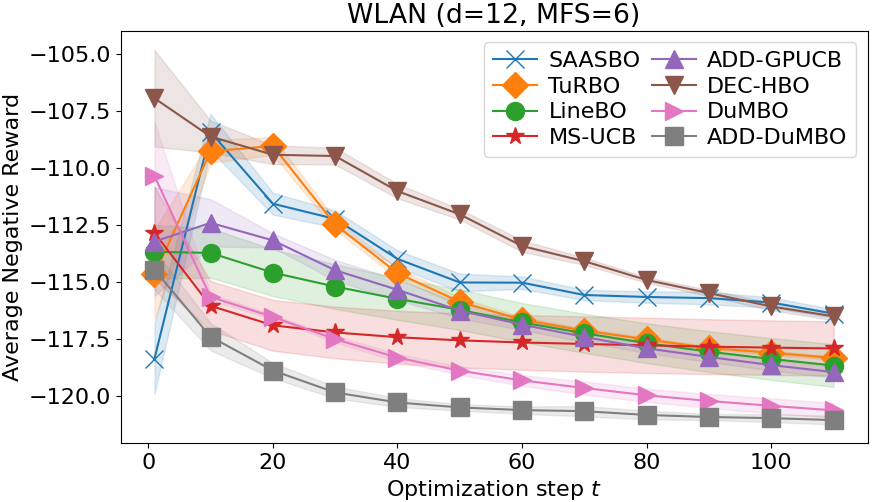} \label{fig:wlan}}
    \subfigure[]{\includegraphics[height=3.4cm]{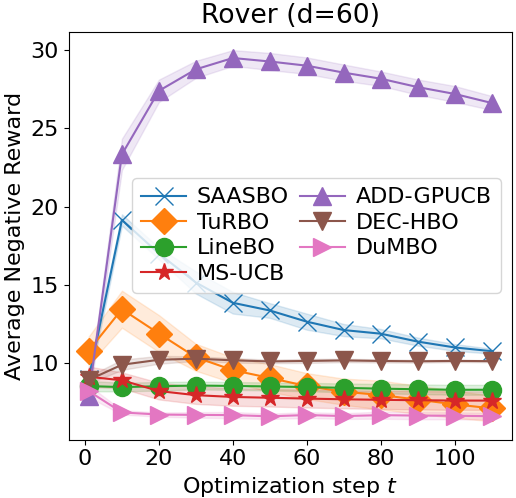} \label{fig:rover}}
    \caption{Performance achieved by the BO algorithms listed in Section~\ref{sec:num_res} for (a)~the 24d Powell synthetic function, (b)~the optimization of the Shannon capacity in a WLAN and (c)~the trajectory planning of a rover. The shaded areas indicate the standard error intervals.}
    \label{fig:real_problem}
\end{figure}

\subsection{Solving Real-World Problems} \label{sec:real_xp}

We consider three real-world problems: (a)~fine-tuning some cosmological constants to maximize the likelihood of observed astronomical data (Cosmo), (b)~controlling the power of devices in a Wireless Local Area Network (WLAN) to maximize its Shannon capacity~\cite{kemperman-shannon:1974} and (c)~the trajectory planning of a rover (Rover). The problems, along with a complete set of figures depicting the performance of the tested BO algorithms, are discussed in details in Appendix~\ref{app:real-world}.

Figures~\ref{fig:wlan} and~\ref{fig:rover} depict the performance of the BO algorithms on problems~(b) and~(c), where $d = 12$ and $60$, respectively. Figure~\ref{fig:wlan} shows that \AlgLagrangian~is able to significantly outperform every other state-of-the-art BO algorithm. Additionally, and similarly to what was observed with Figure~\ref{fig:powell}, Figure~\ref{fig:wlan} suggests that having access, and being able to handle additive decompositions with large MFS, is a significant advantage. As a matter of fact, this allows to outperform BO algorithms that are unable to exploit this additional information. Figure~\ref{fig:rover} exhibits patterns similar to Figure~\ref{fig:powell}: ADD-GPUCB and DEC-HBO fail to infer an adequate additive decomposition because of their restrictive MFS assumptions. In contrast, \AlgLagrangian, which does not make such an assumption on the size of the MFS, achieves the best performance along with TuRBO. Note that ADD-\AlgLagrangian~is not evaluated on problem~(c) since its objective function is not additive.

\section{Conclusion}

In this article, we showed that it is possible to completely relax the restrictive assumptions of low-MFS in the additive decomposition of $f$ without weakening the asymptotic optimality guarantees of decomposing BO algorithms. This allows BO algorithms to simultaneously keep their no-regret property and infer a complex additive decomposition of the objective function $f$, or directly exploit it when it is available. To illustrate the effectiveness of such design choices, we proposed \AlgLagrangian, an asymptotically optimal decentralized BO algorithm that optimizes $f$ using a tighter decentralized approximation of GP-UCB that requires less exploration than the previously proposed approximations. As demonstrated by Sections~\ref{sec:optim} and~\ref{sec:num_res}, \AlgLagrangian~is a no-regret, competitive alternative to state-of-the-art BO algorithms, able to optimize complex objective functions in a small number of iterations. Compared to other decomposing algorithms, such as ADD-GPUCB and DEC-HBO, \AlgLagrangian~brings a significant improvement, particularly when the decomposition of $f$ has a large MFS with numerous factors.


For future work, we plan to extend \AlgLagrangian~to batch mode~\cite{li-multiple:2016, pmlr-v70-daxberger17a} and to apply it to suitable technological contexts such as computer networks~\cite{bardou-inspire:2022}, UAVs~\cite{xie-throughput:2018} or within a robots team~\cite{chen-gaussian:2013}.

\section*{Acknowledgements}

This work was supported by the EDIC doctoral program of EPFL and the LABEX MILYON (ANR-10-LABX-0070) of Universit\'e de Lyon, within the program ``Investissements d'Avenir'' (ANR-11-IDEX- 0007) operated by the French National Research Agency (ANR).

\bibliography{main}

\begin{thebibliography}{10}

\bibitem{bergstra13}
James Bergstra, Daniel Yamins, and David Cox.
\newblock Making a science of model search: Hyperparameter optimization in hundreds of dimensions for vision architectures.
\newblock In {\em International conference on machine learning}, pages 115--123. PMLR, 2013.

\bibitem{lizotte07}
Daniel Lizotte, Tao Wang, Michael Bowling, and Dale Schuurmans.
\newblock Automatic gait optimization with gaussian process regression.
\newblock In {\em Proceedings of the 20th International Joint Conference on Artifical Intelligence}, IJCAI'07, page 944–949, San Francisco, CA, USA, 2007. Morgan Kaufmann Publishers Inc.

\bibitem{hornby06}
Gregory Hornby, Al~Globus, Derek Linden, and Jason Lohn.
\newblock Automated antenna design with evolutionary algorithms.
\newblock In {\em American Institute of Aeronautics and Astronautics}, 2006.

\bibitem{gonzalez14}
Javier González, Joseph Longworth, David~C. James, and Neil~D. Lawrence.
\newblock Bayesian optimization for synthetic gene design.
\newblock In {\em NIPS Workshop on Bayesian Optimization in Academia and Industry}, 2014.

\bibitem{wang13}
Ziyu Wang, Frank Hutter, Masrour Zoghi, David Matheson, and Nando de~Freitas.
\newblock Bayesian optimization in high-dimensions via random embeddings.
\newblock In {\em Proceedings of the Twenty-Third International Joint Conference on Artificial Intelligence}, 2013.

\bibitem{addgpucb}
Kirthevasan Kandasamy, Jeff Schneider, and Barnabas Poczos.
\newblock High dimensional bayesian optimisation and bandits via additive models.
\newblock In {\em Proceedings of the 32nd International Conference on Machine Learning}, volume~37 of {\em Proceedings of Machine Learning Research}, pages 295--304, Lille, France, 07--09 Jul 2015. PMLR.

\bibitem{dec-hbo}
Trong~Nghia Hoang, Quang~Minh Hoang, Ruofei Ouyang, and Kian~Hsiang Low.
\newblock Decentralized high-dimensional bayesian optimization with factor graphs.
\newblock In {\em Proceedings of the Thirty-Second AAAI Conference on Artificial Intelligence and Thirtieth Innovative Applications of Artificial Intelligence Conference and Eighth AAAI Symposium on Educational Advances in Artificial Intelligence}, AAAI'18/IAAI'18/EAAI'18. AAAI Press, 2018.

\bibitem{NEURIPS2018_4e5046fc}
Mojmir Mutny and Andreas Krause.
\newblock Efficient high dimensional bayesian optimization with additivity and quadrature fourier features.
\newblock In {\em Advances in Neural Information Processing Systems}, volume~31. Curran Associates, Inc., 2018.

\bibitem{gpucb}
Niranjan Srinivas, Andreas Krause, Sham~M. Kakade, and Matthias~W. Seeger.
\newblock Information-theoretic regret bounds for gaussian process optimization in the bandit setting.
\newblock {\em IEEE Transactions on Information Theory}, 58(5):3250–3265, 2012.

\bibitem{pi}
Donald~R. Jones, Matthias Schonlau, and William~J. Welch.
\newblock Efficient global optimization of expensive black-box functions.
\newblock {\em Journal of Global optimization}, 13(4):455{\textendash}492, 1998.

\bibitem{ei}
Jonas Mockus.
\newblock Application of bayesian approach to numerical methods of global and stochastic optimization.
\newblock {\em Journal of Global Optimization}, 4:347--365, 1994.

\bibitem{ucb}
Peter Auer.
\newblock Using confidence bounds for exploitation-exploration trade-offs.
\newblock {\em J. Mach. Learn. Res.}, 3:397–422, mar 2003.

\bibitem{turbo}
David Eriksson, Michael Pearce, Jacob Gardner, Ryan~D Turner, and Matthias Poloczek.
\newblock Scalable global optimization via local bayesian optimization.
\newblock In {\em Advances in Neural Information Processing Systems}, volume~32. Curran Associates, Inc., 2019.

\bibitem{rembo}
Ziyu Wang, Frank Hutter, Masrour Zoghi, David Matheson, and Nando De~Feitas.
\newblock Bayesian optimization in a billion dimensions via random embeddings.
\newblock {\em Journal of Artificial Intelligence Research}, 55:361--387, 2016.

\bibitem{alebo}
Ben Letham, Roberto Calandra, Akshara Rai, and Eytan Bakshy.
\newblock Re-examining linear embeddings for high-dimensional bayesian optimization.
\newblock {\em Advances in neural information processing systems}, 33:1546--1558, 2020.

\bibitem{saasbo}
David Eriksson and Martin Jankowiak.
\newblock High-dimensional {Bayesian} optimization with sparse axis-aligned subspaces.
\newblock In {\em Proceedings of the Thirty-Seventh Conference on Uncertainty in Artificial Intelligence}, volume 161 of {\em Proceedings of Machine Learning Research}, pages 493--503. PMLR, 27--30 Jul 2021.

\bibitem{kirschner2019adaptive}
Johannes Kirschner, Mojmir Mutny, Nicole Hiller, Rasmus Ischebeck, and Andreas Krause.
\newblock Adaptive and safe bayesian optimization in high dimensions via one-dimensional subspaces.
\newblock In {\em International Conference on Machine Learning}, pages 3429--3438. PMLR, 2019.

\bibitem{gomez2018automatic}
Rafael G{\'o}mez-Bombarelli, Jennifer~N Wei, David Duvenaud, Jos{\'e}~Miguel Hern{\'a}ndez-Lobato, Benjam{\'\i}n S{\'a}nchez-Lengeling, Dennis Sheberla, Jorge Aguilera-Iparraguirre, Timothy~D Hirzel, Ryan~P Adams, and Al{\'a}n Aspuru-Guzik.
\newblock Automatic chemical design using a data-driven continuous representation of molecules.
\newblock {\em ACS central science}, 4(2):268--276, 2018.

\bibitem{moriconi2020high}
Riccardo Moriconi, Marc~Peter Deisenroth, and KS~Sesh~Kumar.
\newblock High-dimensional bayesian optimization using low-dimensional feature spaces.
\newblock {\em Machine Learning}, 109(9):1925--1943, 2020.

\bibitem{gupta2020trading}
Sunil Gupta, Santu Rana, Svetha Venkatesh, et~al.
\newblock Trading convergence rate with computational budget in high dimensional bayesian optimization.
\newblock In {\em Proceedings of the AAAI Conference on Artificial Intelligence}, volume~34, pages 2425--2432, 2020.

\bibitem{dropout}
Cheng Li, Sunil Gupta, Santu Rana, Vu~Nguyen, Svetha Venkatesh, and Alistair Shilton.
\newblock High dimensional bayesian optimization using dropout.
\newblock In {\em Proceedings of the Twenty-Sixth International Joint Conference on Artificial Intelligence, {IJCAI-17}}, pages 2096--2102, 2017.

\bibitem{song2022monte}
Lei Song, Ke~Xue, Xiaobin Huang, and Chao Qian.
\newblock Monte carlo tree search based variable selection for high dimensional bayesian optimization.
\newblock In {\em Advances in Neural Information Processing Systems}, 2022.

\bibitem{wang17}
Zi~Wang and Stefanie Jegelka.
\newblock Max-value entropy search for efficient {B}ayesian optimization.
\newblock In {\em Proceedings of the 34th International Conference on Machine Learning}, volume~70 of {\em Proceedings of Machine Learning Research}, pages 3627--3635. PMLR, 06--11 Aug 2017.

\bibitem{ziomek2023random}
Juliusz~Krzysztof Ziomek and Haitham~Bou Ammar.
\newblock Are random decompositions all we need in high dimensional bayesian optimisation?
\newblock In {\em International Conference on Machine Learning}, pages 43347--43368. PMLR, 2023.

\bibitem{rogers2011bounded}
Alex Rogers, Alessandro Farinelli, Ruben Stranders, and Nicholas~R Jennings.
\newblock Bounded approximate decentralised coordination via the max-sum algorithm.
\newblock {\em Artificial Intelligence}, 175(2):730--759, 2011.

\bibitem{williams2006gaussian}
Christopher~KI Williams and Carl~Edward Rasmussen.
\newblock {\em Gaussian processes for machine learning}, volume~2.
\newblock MIT press Cambridge, MA, 2006.

\bibitem{gardner-discovering:2017}
Jacob Gardner, Chuan Guo, Kilian Weinberger, Roman Garnett, and Roger Grosse.
\newblock Discovering and exploiting additive structure for bayesian optimization.
\newblock In {\em Artificial Intelligence and Statistics}, pages 1311--1319. PMLR, 2017.

\bibitem{bardou-inspire:2022}
Anthony Bardou and Thomas Begin.
\newblock Inspire: Distributed bayesian optimization for improving spatial reuse in dense wlans.
\newblock In {\em Proceedings of the 25th International ACM Conference on Modeling Analysis and Simulation of Wireless and Mobile Systems}, pages 133--142, 2022.

\bibitem{bourdeau-modeling:2019}
Mathieu Bourdeau, Xiao qiang Zhai, Elyes Nefzaoui, Xiaofeng Guo, and Patrice Chatellier.
\newblock Modeling and forecasting building energy consumption: A review of data-driven techniques.
\newblock {\em Sustainable Cities and Society}, 48:101533, 2019.

\bibitem{xie-throughput:2018}
Lifeng Xie, Jie Xu, and Rui Zhang.
\newblock Throughput maximization for uav-enabled wireless powered communication networks.
\newblock {\em IEEE Internet of Things Journal}, 6(2):1690--1703, 2018.

\bibitem{decomposed_feedback}
Kai Wang, Bryan Wilder, Sze-chuan Suen, Bistra Dilkina, and Milind Tambe.
\newblock Improving gp-ucb algorithm by harnessing decomposed feedback.
\newblock In {\em Machine Learning and Knowledge Discovery in Databases}, pages 555--569, Cham, 2020. Springer International Publishing.

\bibitem{admm_details}
Stephen Boyd, Neal Parikh, Eric Chu, Borja Peleato, and Jonathan Eckstein.
\newblock {\em Distributed Optimization and Statistical Learning via the Alternating Direction Method of Multipliers}.
\newblock Now Foundations and Trends, 2011.

\bibitem{admm}
Daniel Gabay and Bertrand Mercier.
\newblock A dual algorithm for the solution of nonlinear variational problems via finite element approximation.
\newblock {\em Computers \& Mathematics with Applications}, 2(1):17--40, 1976.

\bibitem{adam}
Diederik~P. Kingma and Jimmy Ba.
\newblock Adam: {A} method for stochastic optimization.
\newblock In {\em 3rd International Conference on Learning Representations, {ICLR} 2015, San Diego, CA, USA, May 7-9, 2015, Conference Track Proceedings}, 2015.

\bibitem{liavas2015parallel}
Athanasios~P Liavas and Nicholas~D Sidiropoulos.
\newblock Parallel algorithms for constrained tensor factorization via alternating direction method of multipliers.
\newblock {\em IEEE Transactions on Signal Processing}, 63(20):5450--5463, 2015.

\bibitem{lai2014splitting}
Rongjie Lai and Stanley Osher.
\newblock A splitting method for orthogonality constrained problems.
\newblock {\em Journal of Scientific Computing}, 58(2):431--449, 2014.

\bibitem{chartrand2013nonconvex}
Rick Chartrand and Brendt Wohlberg.
\newblock A nonconvex admm algorithm for group sparsity with sparse groups.
\newblock In {\em 2013 IEEE international conference on acoustics, speech and signal processing}, pages 6009--6013. IEEE, 2013.

\bibitem{admm_conv}
Yu~Wang, Wotao Yin, and Jinshan Zeng.
\newblock Global convergence of admm in nonconvex nonsmooth optimization.
\newblock {\em J. Sci. Comput.}, 78(1):29–63, jan 2019.

\bibitem{balandat2020botorch}
Maximilian Balandat, Brian Karrer, Daniel~R. Jiang, Samuel Daulton, Benjamin Letham, Andrew~Gordon Wilson, and Eytan Bakshy.
\newblock {BoTorch: A Framework for Efficient Monte-Carlo Bayesian Optimization}.
\newblock In {\em Advances in Neural Information Processing Systems 33}, 2020.

\bibitem{kemperman-shannon:1974}
JHB Kemperman.
\newblock On the shannon capacity of an arbitrary channel.
\newblock In {\em Indagationes Mathematicae (Proceedings)}, volume~77, pages 101--115. North-Holland, 1974.

\bibitem{li-multiple:2016}
Cheng Li, Paul Resnick, and Qiaozhu Mei.
\newblock Multiple queries as bandit arms.
\newblock In {\em Proceedings of the 25th ACM International on Conference on Information and Knowledge Management}, pages 1089--1098, 2016.

\bibitem{pmlr-v70-daxberger17a}
Erik~A. Daxberger and Bryan Kian~Hsiang Low.
\newblock Distributed batch {G}aussian process optimization.
\newblock In {\em Proceedings of the 34th International Conference on Machine Learning}, volume~70 of {\em Proceedings of Machine Learning Research}, pages 951--960. PMLR, 06--11 Aug 2017.

\bibitem{chen-gaussian:2013}
Jie Chen, Kian~Hsiang Low, and Colin Keng-Yan Tan.
\newblock Gaussian process-based decentralized data fusion and active sensing for mobility-on-demand system.
\newblock {\em arXiv preprint arXiv:1306.1491}, 2013.

\bibitem{pmlr-v70-wang17h}
Zi~Wang, Chengtao Li, Stefanie Jegelka, and Pushmeet Kohli.
\newblock Batched high-dimensional {B}ayesian optimization via structural kernel learning.
\newblock In {\em Proceedings of the 34th International Conference on Machine Learning}, volume~70 of {\em Proceedings of Machine Learning Research}, pages 3656--3664. PMLR, 06--11 Aug 2017.

\bibitem{lu2022additive}
Xiaoyu Lu, Alexis Boukouvalas, and James Hensman.
\newblock Additive gaussian processes revisited.
\newblock In {\em International Conference on Machine Learning}, pages 14358--14383. PMLR, 2022.

\bibitem{durrande2012additive}
Nicolas Durrande, David Ginsbourger, and Olivier Roustant.
\newblock Additive covariance kernels for high-dimensional gaussian process modeling.
\newblock In {\em Annales de la Facult{\'e} des sciences de Toulouse: Math{\'e}matiques}, volume~21, pages 481--499, 2012.

\bibitem{duvenaud-additive:2011}
David~K Duvenaud, Hannes Nickisch, and Carl Rasmussen.
\newblock Additive gaussian processes.
\newblock {\em Advances in neural information processing systems}, 24, 2011.

\bibitem{robert-metropolis:2010}
Christian Robert, George Casella, Christian~P Robert, and George Casella.
\newblock Metropolis--hastings algorithms.
\newblock {\em Introducing Monte Carlo Methods with R}, pages 167--197, 2010.

\bibitem{xie-linear:2020}
Yuege Xie, Xiaoxia Wu, and Rachel Ward.
\newblock Linear convergence of adaptive stochastic gradient descent.
\newblock In {\em International conference on artificial intelligence and statistics}, pages 1475--1485. PMLR, 2020.

\bibitem{attouch2010proximal}
H{\'e}dy Attouch, J{\'e}r{\^o}me Bolte, Patrick Redont, and Antoine Soubeyran.
\newblock Proximal alternating minimization and projection methods for nonconvex problems: An approach based on the kurdyka-{\l}ojasiewicz inequality.
\newblock {\em Mathematics of operations research}, 35(2):438--457, 2010.

\bibitem{kurdyka1998gradients}
Krzysztof Kurdyka.
\newblock On gradients of functions definable in o-minimal structures.
\newblock In {\em Annales de l'institut Fourier}, volume~48, pages 769--783, 1998.

\bibitem{Chuang_2013}
C.-H. Chuang, F.~Prada, A.~J. Cuesta, D.~J. Eisenstein, E.~Kazin, N.~Padmanabhan, A.~G. Sanchez, X.~Xu, F.~Beutler, M.~Manera, D.~J. Schlegel, D.~P. Schneider, D.~H. Weinberg, J.~Brinkmann, J.~R. Brownstein, and D.~Thomas.
\newblock The clustering of galaxies in the {SDSS}-{III} baryon oscillation spectroscopic survey: single-probe measurements and the strong power of f(z)~8(z) on constraining dark energy.
\newblock {\em Monthly Notices of the Royal Astronomical Society}, 433(4):3559--3571, jul 2013.

\bibitem{Zuntz_2015}
J.~Zuntz, M.~Paterno, E.~Jennings, D.~Rudd, A.~Manzotti, S.~Dodelson, S.~Bridle, S.~Sehrish, and J.~Kowalkowski.
\newblock {CosmoSIS}: Modular cosmological parameter estimation.
\newblock {\em Astronomy and Computing}, 12:45--59, sep 2015.

\bibitem{ns-3}
{The ns3 Project}.
\newblock {T}he {N}etwork {S}imulator ns-3.
\newblock \url{https://www.nsnam.org/}.
\newblock Accessed: 2021-09-30.

\bibitem{wang-batched:2018}
Zi~Wang, Clement Gehring, Pushmeet Kohli, and Stefanie Jegelka.
\newblock Batched large-scale bayesian optimization in high-dimensional spaces.
\newblock In {\em International Conference on Artificial Intelligence and Statistics}, pages 745--754. PMLR, 2018.

\end{thebibliography}
\bibliographystyle{unsrt}

\clearpage
\appendix
 
\section{Factor Graph} \label{app:factor_graph}

In this appendix, we provide an example of an additive decomposition and its associated factor graph. Consider the following additive decomposition
\begin{equation} \label{eq:decomposition_example}
    f(\bm x) = f^{(1)}(x_1, x_3) + f^{(2)}(x_2) + f^{(3)}(x_2, x_3) + f^{(4)}(x_1, x_3).
\end{equation}

\begin{figure}[h]
    \centering
    \includegraphics[height=4.2cm]{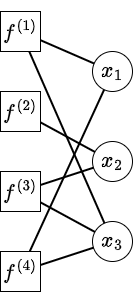}
    \caption{The factor graph of the decomposition~(\ref{eq:decomposition_example}). The factor nodes and variable nodes are depicted with squares and circles, respectively. In this decomposition, there are $n = 4$ factors and $d = 3$ variables.}
    \label{fig:factor_graph}
\end{figure}

The associated factor graph is shown in Figure~\ref{fig:factor_graph}. Observe for instance that, since the first factor $f_1$ exploits the first and third components of the input $\bm x$, the first factor node $f_1$ is connected to the first and third variable nodes $x_1$ and $x_3$.

Using this graph, it is very easy to build $\mathcal{V}_i$ ($1 \leq i \leq n$), the set of variable indices associated with a factor $i$. In fact, $\mathcal{V}_i$ comprises the indices of the variable nodes connected to the factor node $i$. As an example, $\mathcal{V}_1 = \left\{1, 3\right\}$. Similarly, it is trivial to build $\mathcal{F}_j$ ($1 \leq j \leq d$), the set of factor indices associated with a variable $j$. In fact, $\mathcal{F}_j$ comprises the indices of the factor nodes connected to the variable node $j$. As an example, $\mathcal{F}_3 = \left\{1, 3, 4\right\}$ since factors $f_1$, $f_3$ and $f_4$ exploit $x_3$.

\begin{figure}[h]
    \centering
    \includegraphics[height=5.2cm]{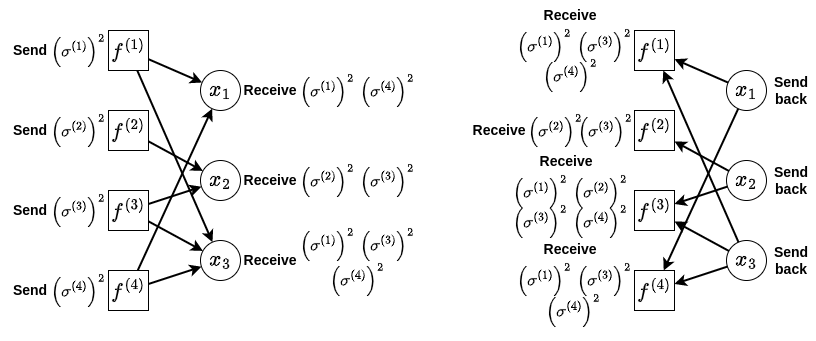}
    \caption{Message passing in the factor graph. (Left) The factor nodes compute their variance terms and send them to their variable nodes. Thus, each variable node $j$ receives the variance terms of the factor nodes in $\mathcal{F}_j$. (Right) The variable nodes send all the collected variance terms to each of their factor nodes. Thus, each factor node $i$ receives the variance terms of the factor nodes in $\mathcal{N}_i = \cup_{j \in \mathcal{V}_i} \mathcal{F}_j$.}
    \label{fig:factor_graph_mp}
\end{figure}

In Section~\ref{sec:acq_dec}, we introduce a new set for a factor node $i$, denoted by $\mathcal{N}_i$, which is the set of indices of the factors that share at least one component of $\bm x$ with the $i$th factor. $\mathcal{N}_i$ can be easily built from the quantities $\left\{\mathcal{V}_i\right\}_{i \in \llbracket1, n\rrbracket}$ and $\left\{\mathcal{F}_j\right\}_{j \in \llbracket1, d\rrbracket}$ as $\mathcal{N}_i = \cup_{j \in \mathcal{V}_i} \mathcal{F}_j$. Note that, by construction, $i \in \mathcal{N}_i$. Using the factor graph in Figure~\ref{fig:factor_graph} as an example, we can see that $\mathcal{N}_3 = \left\{2, 3, 4\right\}$. In practice, the sets $\left\{\mathcal{N}_i\right\}_{i \in \llbracket1, n\rrbracket}$ can be computed with message-passing, as illustrated by Figure~\ref{fig:factor_graph_mp}.

\section{Inference of the Additive Decomposition} \label{app:add_dec}

Our decentralized algorithm requires an additive decomposition of the objective function $f$, as specified in Assumption~\ref{ass:dec} and exploited in Proposition~\ref{prop:inference}. If the decomposition is known, it can be directly specified to \AlgLagrangian. If the decomposition is unknown, it can be inferred from the data~\cite{gardner-discovering:2017, pmlr-v70-wang17h, lu2022additive, durrande2012additive}. Also note that in a recent work, \cite{ziomek2023random} showed that, in an adversarial context, using random decompositions is optimal on average. Since adversarial contexts are not the primary focus of this paper, we consider the approach introduced by~\cite{gardner-discovering:2017}, and we briefly discuss it in the following.

As in~\cite{dec-hbo}, let us associate each candidate additive decomposition $\mathcal{A}$ with the kernel of an additive GP~\cite{duvenaud-additive:2011}. Given $k$ candidates $\mathcal{A}_1, \cdots, \mathcal{A}_k$, we reformulate the acquisition function $\varphi_t$ as a weighted average with respect to the posterior of each candidate given the dataset $\mathcal{S} = \left\{(\bm x_i, y_i)\right\}_{i \in \llbracket1, t\rrbracket}$ composed of the selected input queries and their observed noisy outputs, that is
\begin{align}
    \varphi_t(\bm x) &= \sum_{i = 1}^k p(\mathcal{A}_i | \mathcal{S}) \varphi_t^{\mathcal{A}_i}(\bm x) \label{eq:appA-initial}\\
    &= \sum_{i = 1}^k p(\mathcal{A}_i | \mathcal{S}) \sum_{j = 1}^{|\mathcal{A}_i|} \varphi_t^{(j)}(\bm x_{\mathcal{V}_j}) \label{eq:appA-expanded}\\
    &\approx \frac{1}{k} \sum_{i = 1}^k \sum_{j = 1}^{|\mathcal{A}_i|} \varphi_t^{(j)}(\bm x_{\mathcal{V}_j}), \label{eq:appA-final}
\end{align}
with $\varphi_t^{\mathcal{A}_i}$ our proposed acquisition function given the additive decomposition $\mathcal{A}_i$, $\varphi_t^{(j)}$ given by~(\ref{eq:local_acq_f}). (\ref{eq:appA-expanded})~follows from~(\ref{eq:appA-initial}) since the additive decomposition $\mathcal{A}_i$ also provides an additive decomposition of our proposed acquisition function, and~(\ref{eq:appA-final}) follows from~(\ref{eq:appA-expanded}) as proposed by~\cite{gardner-discovering:2017}. 

The candidates $\mathcal{A}_1, \cdots, \mathcal{A}_k$ are sampled by Monte-Carlo Markov Chain (MCMC) with the Metropolis-Hastings algorithm~\cite{robert-metropolis:2010}, starting from the fully dependent decomposition $\mathcal{A}_0 = \left\{\left\{1, \cdots, d\right\}\right\}$ at $t = 0$. When the decomposition is unknown, at each time step $t$, $k$ promising decompositions are sampled by MCMC starting from the last sampled decomposition at time step $t-1$, and~(\ref{eq:appA-final}) is maximized by our decentralized algorithm to find a promising input $\bm x$ to query.

\section{Inference Formulas with Decomposed Output} \label{app:inference_forms}

The decentralized algorithm \AlgLagrangian~requires an additive decomposition of the objective function $f$, but we do not assume that the corresponding decomposition of the output is observable. Nevertheless, as demonstrated by~\cite{decomposed_feedback}, having access to such a decomposed output improves the regression capabilities of the surrogate GP model, mostly by reducing the variance of its predictions. In this appendix, we derive the counterparts of the posterior mean~(\ref{eq:f_post_mean}) and the posterior variance~(\ref{eq:f_post_var}) when the output decomposition of $f$ is observable.

Observing the output decomposition of $f$ means that $f$ is now a function $\mathbb{R}^d \to \mathbb{R}^n$, where $n$ is the number of factors in its additive decomposition. At time~$t$, a BO algorithm has access to a $t \times n$ matrix $\bm Y$ instead of a $t$-dimensional output vector $\bm y$, such that $\bm Y \bm 1 = \bm y$, where $\bm 1$ is the $n$-dimensional all-$1$ vector.

Having access to the matrix $\bm Y$ allows to train $n$ different GPs instead of a single one with an additive kernel~\cite{duvenaud-additive:2011}, so that the $i$th $\mathcal{GP}\left(0, k^{(i)}\left(\bm x_{\mathcal{V}_i}, \bm x'_{\mathcal{V}_i}\right)\right)$ serves as a surrogate model only for the $i$th factor of the decomposition of $f$. To condition the $i$th GP, we consider the data set $\mathcal{S}_i = \left\{\left(\bm x^j_{\mathcal{V}_i}, Y_{j,i}\right)\right\}_{j \in \llbracket1, t\rrbracket}$. Given $\mathcal{S}_i$, the expressions of the posterior mean $\mu^{(i)}_{t+1}$ and the posterior variance $\left(\sigma^{(i)}_{t+1}\right)^2$ are simple instances of the conditioned Gaussian distribution formulas, where
\begin{align}
    \mu_{t+1}^{(i)}(\bm x) &= \bm k^{(i)\top}_{\bm x_{\mathcal{V}_i}} \left(\bm K_{(i)} + \sigma^2_i \bm I\right)^{-1} \bm Y_{:i}, \label{eq:post_mean_dec}\\
    \left(\sigma_{t+1}^{(i)}(\bm x)\right)^2 &= k^{(i)}\left(\bm x_{\mathcal{V}_i}, \bm x_{\mathcal{V}_i}\right) - \bm k^{(i)\top}_{\bm x_{\mathcal{V}_i}} \left(\bm K_{(i)} + \sigma^2_i \bm I\right)^{-1} \bm k^{(i)}_{\bm x_{\mathcal{V}_i}}, \label{eq:post_var_dec}
\end{align}
with $\bm Y_{:i}$ the $i$th column of $\bm Y$, $t \times 1$ vectors $\bm k_{\bm x_{\mathcal{V}_i}}^{(i)} = (k^{(i)}(\bm x_{\mathcal{V}_i}, \bm x^j_{\mathcal{V}_i}))_{j \in \llbracket1, t\rrbracket}$, $t \times t$ matrices $\bm K_{(i)} = (k^{(i)}(\bm x^j_{\mathcal{V}_i}, \bm x^k_{\mathcal{V}_i}))_{j,k \in \llbracket1, t\rrbracket}$ and $\bm I$ the $t \times t$ identity matrix.

Equations~(\ref{eq:post_mean_dec}) and~(\ref{eq:post_var_dec})  differ from~(\ref{eq:f_post_mean}) and~(\ref{eq:f_post_var}) mainly by their ability to exploit the inverse of the Gram matrix built only from the $i$th covariance function $k^{(i)}$, and of course, the outputs of the $i$th factor of the decomposition $\bm Y_{:i}$.

\section{Proposed Acquisition Function} \label{app:approx_explore}

In this appendix, we prove Theorem~\ref{thm:better_approx}. Let us start by the following lemma.

\begin{lemma} \label{lem:upper_bound}
    For all factor graphs and for all $\bm x \in \mathcal{D}$,
    \begin{equation}
        \sum_{i = 1}^n \sqrt{\sum_{k \in \mathcal{N}_i} \frac{\left(\sigma^{(k)}_t(\bm x_{\mathcal{V}_k})\right)^2}{|\mathcal{N}_k|^2} } \leq \sum_{i = 1}^n \sigma_t^{(i)}(\bm x_{\mathcal{V}_i}).
    \end{equation}
\end{lemma}

\begin{proof}
    \begin{align}
        \sum_{i = 1}^n \sqrt{\sum_{k \in \mathcal{N}_i} \frac{\left(\sigma^{(k)}_t(\bm x_{\mathcal{V}_k})\right)^2}{|\mathcal{N}_k|^2}} &\leq \sum_{i = 1}^n \sum_{k \in \mathcal{N}_i} \frac{\sigma^{(k)}_t(\bm x_{\mathcal{V}_k})}{|\mathcal{N}_k|} \label{eq:proof_acq_approx_ti}\\
        &= \sum_{i = 1}^n \sigma^{(i)}_t(\bm x_{\mathcal{V}_i}) \sum_{k \in \mathcal{N}_i} \frac{1}{|\mathcal{N}_i|} \label{eq:proof_acq_approx_neighborhood}\\
        &= \sum_{i = 1}^n \sigma^{(i)}_t(\bm x_{\mathcal{V}_i}). \nonumber
    \end{align}

    Equation~(\ref{eq:proof_acq_approx_ti}) comes from a simple application of the triangle inequality with the euclidean norm. Equation~(\ref{eq:proof_acq_approx_neighborhood}) follows by construction of each $\mathcal{N}_k$, as the term $\frac{\sigma^{(k)}_t(\bm x_{\mathcal{V}_k})}{|\mathcal{N}_k|}$ appears exactly $|\mathcal{N}_k|$ times within the double sum of the RHS (right hand side) of~(\ref{eq:proof_acq_approx_ti}). This concludes the proof.
\end{proof}

Now, let us prove the second lemma that leads to Theorem~\ref{thm:better_approx}.

\begin{lemma} \label{lem:lower_bound}
    For all factor graphs and for all $\bm x \in \mathcal{D}$,
    \begin{equation} \label{eq:proof_lemma_lhs}
        \sigma_t(\bm x) \leq \sum_{i = 1}^n \sqrt{\sum_{k \in \mathcal{N}_i} \frac{\left(\sigma^{(k)}_t(\bm x_{\mathcal{V}_k})\right)^2}{|\mathcal{N}_k|^2}}.
    \end{equation}
\end{lemma}

\begin{proof}
    First, let us denote the LHS (left hand side) of~(\ref{eq:proof_lemma_lhs}) by $L(\sigma^{(1)}_t, \cdots, \sigma^{(n)}_t)$ and the RHS (right hand side) of~(\ref{eq:proof_lemma_lhs}) by $R(\sigma^{(1)}_t, \cdots, \sigma^{(n)}_t)$. Note that the arguments of the variance terms have been omitted for the sake of simplicity. Next, let us differentiate both $L$ and $R$ with respect to each variance term $\left(\sigma^{(k)}_t\right)^2$. We find
    \begin{align}
        \frac{\partial L}{\partial \left(\sigma^{(k)}_t\right)^2} &= \frac{1}{2 \sqrt{\sum_{i = 1}^n \left(\sigma^{(i)}_t\right)^2}}, \label{eq:proof_partial_lhs}\\
        \frac{\partial R}{\partial \left(\sigma^{(k)}_t\right)^2} &= \sum_{i \in \mathcal{N}_k}\frac{1}{2 |\mathcal{N}_k|^2 \sqrt{\sum_{j \in \mathcal{N}_i} \frac{\left(\sigma^{(j)}_t\right)^2}{|\mathcal{N}_j|^2}}}. \label{eq:proof_partial_rhs}
    \end{align}
    Let us now compare~(\ref{eq:proof_partial_lhs})
and~(\ref{eq:proof_partial_rhs}) by studying their ratio, denoted $Q_k$,
    \begin{align}
        Q_k &= \left(\sum_{i \in \mathcal{N}_k}\frac{\sqrt{\sum_{j = 1}^n \left(\sigma^{(j)}_t\right)^2}}{|\mathcal{N}_k|^2 \sqrt{\sum_{j \in \mathcal{N}_i} \frac{\left(\sigma^{(j)}_t\right)^2}{|\mathcal{N}_j|^2}}}\right)^{-1} \nonumber\\
        &\leq \left(\sum_{i = 1}^n\frac{\sqrt{\sum_{j = 1}^n \left(\sigma^{(j)}_t\right)^2}}{n^2 \sqrt{\sum_{j = 1}^n \frac{\left(\sigma^{(j)}_t\right)^2}{n^2}}}\right)^{-1} \label{eq:proof_partial_ratio_maj}\\
        &= \left(\frac{1}{n} \sum_{i = 1}^n\frac{\sqrt{\sum_{j = 1}^n \left(\sigma^{(j)}_t\right)^2}}{\sqrt{\sum_{j = 1}^n \left(\sigma^{(j)}_t\right)^2}}\right)^{-1} \nonumber\\
        &= 1
    \end{align}
    where~(\ref{eq:proof_partial_ratio_maj}) comes from the fact that $Q_k$ is maximized when $|\mathcal{N}_i| = n, \forall i \in [1, n]$.

    Since $\forall k \in [1, n], Q_k \leq 1$, we have $0 \leq \frac{\partial L}{\partial \left(\sigma^{(k)}_t\right)^2} \leq \frac{\partial R}{\partial \left(\sigma^{(k)}_t\right)^2}$. This inequality can be exploited by observing that
    \begin{align}
        L(\sigma^{(1)}_t, \cdots, \sigma^{(n)}_t) &= L(0, \cdots, 0) + \sum_{i = 1}^n \int_0^{\left(\sigma^{(i)}_t\right)^2} \frac{\partial L}{\partial \left(\sigma^{(i)}_t\right)^2} d\left(\sigma^{(i)}_t\right)^2 \nonumber\\
        &= \sum_{i = 1}^n \int_0^{\left(\sigma^{(i)}_t\right)^2} \frac{\partial L}{\partial \left(\sigma^{(i)}_t\right)^2} d\left(\sigma^{(i)}_t\right)^2 \label{eq:L_proof_nozero}\\
        &\leq \sum_{i = 1}^n \int_0^{\left(\sigma^{(i)}_t\right)^2} \frac{\partial R}{\partial \left(\sigma^{(i)}_t\right)^2} d\left(\sigma^{(i)}_t\right)^2 \label{eq:L_proof_inequality}\\
        &= R(\sigma^{(1)}_t, \cdots, \sigma^{(n)}_t) \label{eq:R_proof_inequality}
    \end{align}
    where~(\ref{eq:L_proof_nozero}) and (\ref{eq:R_proof_inequality}) follow from $L(0, \cdots, 0) = R(0, \cdots, 0) = 0$, and (\ref{eq:L_proof_inequality}) comes from the inequality involving the partial derivatives of $L$ and $R$ established above.
    
    This directly implies that for any $\bm x \in \mathcal{D}$ and any factor graph, (\ref{eq:better_approx}) is greater than (or equal to) $\sigma_t(\bm x)$.
\end{proof}

Combining Lemmas~\ref{lem:upper_bound} and~\ref{lem:lower_bound} together yields Theorem~\ref{thm:better_approx}.

\section{\AlgLagrangian: Algorithm and Time Complexity} \label{app:algo}

In this section, we describe \AlgLagrangian~in Algorithm~\ref{alg:algo}, alongside its time complexity.

\begin{algorithm}[h]
    \caption{\AlgLagrangian} \label{alg:algo}
    \begin{algorithmic}[1]
    \STATE {\bfseries Input}: factor graph of the acquisition function $\varphi_t$, sequence of $\beta_t$, $\eta > 0$.
    \STATE $t = 0$
    \WHILE{\textbf{true}}
        \STATE $\bm \lambda_0 = \bm 0$
        \STATE $\forall i \in \llbracket1, n\rrbracket$, init $\bm x^{(i)}_{0}$ randomly
        \STATE $k$ = 0
        \WHILE{stopping criterion not met}
            \FORALL{factor node $i$ [concurrently]}
                \IF{$k \neq 0$}
                    \STATE Update $\bm \lambda_k^{(i)}$ with~(\ref{eq:lambda_closed_form}) and the received data
                    \STATE Compute $c_i = \sum_{\substack{l \in \mathcal{N}_i\\l \neq i}} \frac{\left(\sigma^{(l)}_t(\bm x^{(l)}_k)\right)^2}{|\mathcal{N}_l|^2}$ with the received data
                \ENDIF
                \STATE Compute $\bm x^{(i)}_{k + 1}$ by maximizing~(\ref{eq:local_lag}) with gradient ascent starting from $\bm x^{(i)}_{k}$
                \STATE Send $x^{(i)}_{k + 1, j}$ and $\frac{\left(\sigma^{(i)}_t(\bm x^{(i)}_{k + 1})\right)^2}{|\mathcal{N}_i|^2}$ to the variable node $j$, $\forall j \in \mathcal{V}_i$
            \ENDFOR
            \FORALL{variable node $j$ [concurrently]}
                \STATE Compute $\Bar{x}_{k+1,j}$ with~(\ref{eq:consensus_final_form})
                \STATE Send $\Bar{x}_{k+1,j}$ and $\left\{\frac{\left(\sigma^{(i)}_t(\bm x^{(i)}_{k + 1})\right)^2}{|\mathcal{N}_i|^2}\right\}_{i \in \mathcal{F}_j}$ to the factor nodes in $\mathcal{F}_j$
            \ENDFOR
            \STATE $k = k + 1$
        \ENDWHILE
        \STATE Observe $y^{(t+1)} = f(\Bar{\bm x}_k) + \epsilon, \epsilon \sim \mathcal{N}\left(0, \sigma^2\right)$
        \STATE $t = t + 1$
    \ENDWHILE
    \end{algorithmic}
\end{algorithm}

We now provide a time complexity analysis for Algorithm~\ref{alg:algo}. The analysis assumes that a gradient ascent performs $\mathcal{O}\left(\zeta^{-1}\right)$ steps for a desired accuracy $\zeta$~\cite{xie-linear:2020} and ADMM converges in at most $N_A$ steps. We also denote by $d^{(i)}$ the factor size of the $i$th factor in the decomposition, used by the local acquisition function $\varphi^{(i)}_t$. Note that, within the factor graph of $\varphi_t$, $n$ factor nodes and $d$ variable nodes work concurrently to run ADMM in a decentralized fashion. We provide the time complexities for the two types of nodes in this section (the communication costs between factor nodes and variable nodes are neglected for the clarity of the analysis).

\paragraph{Factor node.}{For a factor node $i$, it is known that, at iteration $t$, the time complexity of the inference with a GP is $\mathcal{O}\left(t^3 d^{(i)}\right)$, where $t$ denotes the number of previous observations. Thus, the time complexity of evaluating~(\ref{eq:local_lag}) is $\mathcal{O}\left(t^3 d^{(i)}\right)$. Since the evaluation is required $\mathcal{O}\left(\zeta^{-1}\right)$ times by the gradient ascent, the time complexity of finding $x^{(i)}_{k + 1}$ is $\mathcal{O}\left(\zeta^{-1}t^3 d^{(i)}_m\right)$. A factor node also needs to compute $\bm \lambda^{(i)}_{k + 1}$, which is $\mathcal{O}\left(d^{(i)}\right)$. Since the factor node is called at least once and at most $N_A$ times for ADMM to converge, the time complexity of a factor node is $\mathcal{O}\left(d^{(i)} \zeta^{-1} t^3 N_A\right)$.}

\paragraph{Variable node.}{A variable node $j$ is simply in charge of collecting messages from $|\mathcal{F}_j|$ factor nodes, and to aggregate them into $\Bar{x}_{k + 1, j}$ by averaging. Its time complexity is therefore $\mathcal{O}\left(|\mathcal{F}_j|\right)$.}

\section{Restricted Prox-Regularity and KL Property of the Acquisition Function} \label{app:admm_global_conv}

In this section, we prove Theorem~\ref{thm:global_convergence_admm}, which is split in two lemmas (Lemmas~\ref{lem:rest_prox_reg} and~\ref{lem:kl_function}) on the acquisition function and the augmented Lagrangian, respectively. For the sake of completeness, we provide the definition of the property to prove in each lemma before stating the lemma itself.

Let us recall the definition of restricted prox-regularity.

\begin{definition}[Restricted Prox-Regularity~\cite{admm_conv}] \label{def:rest-prox-reg}
    For a lower semi-continuous function $\varphi_t^{(i)}$, let $M \in \mathbb{R}_+$, and define the exclusion set
    \begin{equation} \label{eq:exclusion_set}
        S_M = \left\{\bm x \in \mathcal{D}^{(i)} : ||\bm g||_2 > M, \forall \bm g \in \partial \varphi_t^{(i)}(\bm x)\right\}
    \end{equation}
    with $\partial \varphi_t^{(i)}(\bm x)$ the set of all subgradients of $\varphi_t^{(i)}(\bm x)$.

    $\varphi_t^{(i)}$ is called restricted prox-regular if, for any $M > 0$ and bounded set $T \subseteq \mathcal{D}^{(i)}$, there exists $\gamma > 0$ such that
    \begin{equation} \label{eq:rest-prox-reg}
        \varphi_t^{(i)}(\bm x') + \frac{\gamma}{2}||\bm x' - \bm x||_2^2 \geq \varphi_t^{(i)}(\bm x) + \bm g^\top(\bm x' - \bm x), \forall \bm x \in T \setminus S_M, \bm x' \in T, \bm g \in \partial \varphi_t^{(i)}(\bm x).
    \end{equation}

    If $T \setminus S_M = \emptyset$, (\ref{eq:rest-prox-reg}) is automatically satisfied.
\end{definition}

\begin{lemma} \label{lem:rest_prox_reg}
    Under Assumption~\ref{ass:hessian_bounded}, for any $i \in \llbracket1, n\rrbracket$, $\varphi_t^{(i)}$ (see~(\ref{eq:local_acq_f})) is restricted-prox regular.
\end{lemma}

\begin{proof}
First, note that $\varphi_t^{(i)}$, given by~(\ref{eq:local_acq_f}), is indeed a continuous function, and therefore a lower semi-continuous function as required by~\cite{admm_conv}.

Then, let us pick $M > 0$, any bounded set $T \subseteq \mathcal{D}^{(i)}$ and build the corresponding exclusion set $S_M$ with~(\ref{eq:exclusion_set}). Under Assumption~\ref{ass:hessian_bounded} and for any $\bm x \in T \setminus S_M$, (\ref{eq:local_acq_f}) is differentiable. Therefore, for any $\bm x \in \mathcal{D}^{(i)}$, $\partial \varphi_t^{(i)}(\bm x) = \left\{\nabla \varphi_t^{(i)}(\bm x)\right\}$. $\nabla \varphi_t^{(i)}(\bm x)$ is directly obtained by differentiation of~(\ref{eq:local_acq_f}), which also involves differentiating~(\ref{eq:f_post_mean}) and~(\ref{eq:f_post_var}).
\begin{equation} \label{eq:acq_grad}
    \nabla \varphi_t^{(i)}(\bm x) = \nabla \bm k^{(i)}(\bm x, \bm X)^\top \left(\bm K + \sigma^2 \bm I\right)^{-1} \bm y - \frac{\beta^{1/2}_t}{|\mathcal{N}_i|} \frac{\nabla \bm k^{(i)}(\bm x, \bm X)^\top \left(\bm K + \sigma^2 \bm I\right)^{-1} \bm k^{(i)}(\bm x, \bm X)}{D(\bm x)^{1/2}}
\end{equation}
with $\nabla \bm k^{(i)}(\bm x, \bm X)$ being the $t \times d$ matrix $\left(\nabla k^{(i)}(\bm x, \bm x^1), \cdots, \nabla k^{(i)}(\bm x, \bm x^t)\right)^\top$, $\lambda^{(i)} = k^{(i)}(\bm x, \bm x)$ and
\begin{equation}
    D(\bm x) = \lambda^{(i)} - \bm k^{(i)}(\bm x, \bm X)^\top \left(\bm K + \sigma^2 \bm I\right)^{-1} \bm k^{(i)}(\bm x, \bm X) + |\mathcal{N}_i|^2 c_i.
\end{equation}

Observe that, for any $\bm x \in T \setminus S_M$, we have $|D(\bm x)| \geq D_M$, for some $D_M > 0$. This is because, otherwise, $||\nabla \varphi_t^{(i)}(\bm x)||_2$ could diverge, which is impossible for $\bm x \in T \setminus S_M$. Under Assumption~\ref{ass:hessian_bounded}, (\ref{eq:acq_grad}) is also differentiable. For any $M > 0$, any bounded set $T \subseteq \mathcal{D}^{(i)}$, any $\bm x \in T \setminus S_M$ and any $\bm x' \in T$, we have
\begin{equation}
    \varphi_t^{(i)}(\bm x') - \varphi_t^{(i)}(\bm x) - \nabla \varphi_t^{(i)}(\bm x)^\top (\bm x' - \bm x) \geq -\frac{||\nabla^2 \varphi_t^{(i)}(\bm x)||_2}{2} ||\bm x' - \bm x||_2^2
\end{equation}
with $\nabla^2 \varphi_t^{(i)}(\bm x)$ the Hessian matrix of $\varphi_t^{(i)}$ at point $\bm x$.

We will now bound $||\nabla^2 \varphi_t^{(i)}(\bm x)||_2$ from above by decomposing $\nabla^2 \varphi_t^{(i)}(\bm x) = \bm A(\bm x) + \frac{\beta^{1/2}_t}{|\mathcal{N}_i|} \left(\bm B(\bm x) + \bm C(\bm x)\right)$ with
\begin{align}
    \bm A(\bm x) &= \nabla^2 \bm k^{(i)}(\bm x, \bm X) \left(\bm K + \sigma^2 \bm I\right)^{-1} \bm y,\\
    \bm B(\bm x) &= \frac{1}{D(\bm x)^{1/2}} \nabla^2 \bm k^{(i)}(\bm x, \bm X) \left(\bm K + \sigma^2 \bm I\right)^{-1} \bm k^{(i)}(\bm x, \bm X) + \nabla \bm k^{(i)}(\bm x, \bm X)^\top \left(\bm K + \sigma^2 \bm I\right)^{-1} \nabla \bm k^{(i)}(\bm x, \bm X),\label{eq:b}\\
    \bm C(\bm x) &= \frac{1}{D(\bm x)^{3/2}} \left(\nabla \bm k^{(i)}(\bm x, \bm X)^\top \left(\bm K + \sigma^2 \bm I\right)^{-1} \bm k^{(i)}(\bm x, \bm X)\right)^2 \label{eq:c},
\end{align}
where $\nabla^2 \bm k^{(i)}(\bm x, \bm X)$ is the $d \times d \times t$ tensor resulting from stacking the Hessian matrices $\nabla^2 k^{(i)}(\bm x, \bm x^j)$ ($j \in \llbracket1, t\rrbracket$) on the third axis of the tensor.

We can now bound $||\bm A(\bm x)||_2$, $||\bm B(\bm x)||_2$ and $||\bm C(\bm x)||_2$ from above for any $\bm x \in T \setminus S_M$. In particular, note that under Assumption~\ref{ass:hessian_bounded}, the Pythagorean theorem yields $||\nabla^2 k^{(i)}(\bm x)||_2 = \sqrt{t}H$. Also, note that $||\bm k^{(i)}(\bm x, \bm X)||_2 \leq \sqrt{t} \lambda^{(i)}$. Therefore, we have
\begin{align}
    ||\bm A(\bm x)||_2 &\leq \sqrt{t}H||\left(\bm K + \sigma^2 \bm I\right)^{-1}||_2 ||\bm y||_2, \label{eq:ub_a}\\
    ||\bm B(\bm x)||_2 &\leq \frac{1}{D_M^{1/2}}\left(\sqrt{t}H||\left(\bm K + \sigma^2 \bm I\right)^{-1}||_2 \sqrt{t} \lambda^{(i)} + L||\left(\bm K + \sigma^2 \bm I\right)^{-1}||_2L\right) \nonumber\\
    &= \frac{||\left(\bm K + \sigma^2 \bm I\right)^{-1}||_2}{D_M^{1/2}}\left(tH\lambda^{(i)} + L^2\right), \label{eq:ub_b}\\
    ||\bm C(\bm x)||_2 &\leq \frac{L^2 ||\left(\bm K + \sigma^2 \bm I\right)^{-1}||_2^2 t\left(\lambda^{(i)}\right)^2}{D_M^{3/2}} \label{eq:ub_c}
\end{align}

Note that the occurrences of $L$ in~(\ref{eq:ub_b}) and~(\ref{eq:ub_c}) are due to $k^{(i)}$ being an $L$-Lipschitz function (see Assumption~\ref{ass:hessian_bounded}).

Combining~(\ref{eq:ub_a}), (\ref{eq:ub_b}) and~(\ref{eq:ub_c}), we have an upper bound for $||\nabla^2 \varphi_t^{(i)}(\bm x)||_2$ for any $\bm x \in T \setminus S_M$:

\begin{equation} \label{eq:ub_hessian_acq}
    ||\nabla^2 \varphi_t^{(i)}(\bm x)||_2 \leq \sqrt{t}||\left(\bm K + \sigma^2 \bm I\right)^{-1}||_2 \left(H||\bm y||_2 + \frac{\beta^{1/2}_t \sqrt{t}}{|\mathcal{N}_i| D_M^{1/2}} \left(H\lambda^{(i)} + L^2\left(1 + \frac{||\left(\bm K + \sigma^2 \bm I\right)^{-1}||_2 \left(\lambda^{(i)}\right)^2}{D_M^2}\right)\right)\right).
\end{equation}

Let us denote by $\gamma$ the RHS of~(\ref{eq:ub_hessian_acq}). Then, for any $M > 0$, any $T \subseteq \mathcal{D}^{(i)}$, any $\bm x \in T \setminus S_M$ and any $\bm x' \in T$, we have
\begin{equation*}
    \varphi_t^{(i)}(\bm x') - \varphi_t^{(i)}(\bm x) - \nabla \varphi_t^{(i)}(\bm x)^\top (\bm x' - \bm x) \geq -\frac{\gamma}{2} ||\bm x' - \bm x||_2^2,
\end{equation*}
which is equivalent to
\begin{equation}
    \varphi_t^{(i)}(\bm x') + \frac{\gamma}{2} ||\bm x' - \bm x||_2^2 \geq \varphi_t^{(i)}(\bm x) + \nabla \varphi_t^{(i)}(\bm x)^\top(\bm x' - \bm x).
\end{equation}

This concludes the proof.
\end{proof}

Now let us focus on the last part of Theorem~\ref{thm:global_convergence_admm}, regarding the augmented Lagrangian $\mathcal{L}_\eta$ (see~(\ref{eq:lagrangian})). We must prove that $\mathcal{L}_\eta$ is a KL function, so let us first provide the definition for the sake of completeness.

\begin{definition}[Kurdyka-Łojasiewicz Function~\cite{attouch2010proximal}] \label{def:kl}
    A function $\mathcal{L}$ satisfies the Kurdyka-Łojasiewicz (KL) inequality in $\bm x \in \text{dom } \partial \mathcal{L}$ if there exist $\gamma \in (0, +\infty]$, a neighborhood $U$ of $\bm x$ and a continuous concave function $\xi : [0, \gamma) \to \mathbb{R}_+$, such that:
    \begin{itemize}
        \item $\xi(0) = 0$,
        \item $\xi$ is $C^1$ on $(0, \gamma)$,
        \item $\forall s \in (0, \gamma), \xi'(s) > 0$,
        \item $\forall \bm x' \in U \cap \left\{\bm x' | \mathcal{L}(\bm x) < \mathcal{L}(\bm x') < \mathcal{L}(\bm x) + \gamma\right\}$, the KL inequality holds
    \end{itemize}
    \begin{equation} \label{eq:kl_inequality}
        \xi'\left(\mathcal{L}(\bm x') - \mathcal{L}(\bm x)\right) \cdot \text{dist}\left(0, \partial \mathcal{L}(\bm x')\right) \geq 1
    \end{equation}
    A function satisfying the KL inequality $\forall \bm x \in \text{dom } \partial \mathcal{L}$ is called a KL function.
\end{definition}

\begin{lemma} \label{lem:kl_function}
    $\mathcal{L}_\eta$ (see~(\ref{eq:lagrangian})) is a KL function.
\end{lemma}

\begin{proof}
    First of all, observe that $\mathcal{L}_\eta$ is continuous, hence a lower semi-continuous function. Also, note that $\mathcal{L}_\eta$ is either not subdifferentiable (because of the square root in~(\ref{eq:local_acq_f})) or differentiable. Therefore, $\mathcal{L}_\eta$ is differentiable on $\text{dom } \partial \mathcal{L}_\eta$ and~(\ref{eq:kl_inequality}) can be rewritten as
    \begin{equation} \label{eq:kl_inequality_v3}
        \xi'\left(\mathcal{L}_\eta(\bm x') - \mathcal{L}_\eta(\bm x)\right) ||\nabla \mathcal{L}_\eta(\bm x')||_2 \geq 1, \forall \bm x \in \text{dom } \partial \mathcal{L}_\eta.
    \end{equation}
    
    (i) Let us first pick a noncritical point of $\mathcal{L}_\eta$. It is known (see Lemma~2 in~\cite{attouch2010proximal}) that for any noncritical point $\bm x$ of $\mathcal{L}_\eta$, there is $c > 0$ such that, for any $\bm x' \in \text{dom } \mathcal{L}_\eta$, we have
    \begin{equation} \label{eq:noncritical_point}
        ||\bm x' - \bm x||_2 + |\mathcal{L}_\eta(\bm x') - \mathcal{L}_\eta(\bm x)| < c \implies ||\nabla \mathcal{L}_\eta(\bm x')||_2 \geq c.
    \end{equation}

    The point $\bm x$ being a noncritical point of $\mathcal{L}_\eta$, (\ref{eq:kl_inequality}) is verified with $U = \left\{\bm x' : \bm x' \in \text{dom } \mathcal{L}_\eta, |\mathcal{L}_\eta(\bm x') - \mathcal{L}_\eta(\bm x)| < c - ||\bm x' - \bm x||_2\right\}$, $\gamma = c$ and $\xi(s) = \frac{2}{\sqrt{c}} \sqrt{s}$. Indeed then $\xi'(s) = \frac{1}{\sqrt{cs}}$, and for any $\bm x' \in U \cap \left\{\bm z | \mathcal{L}_\eta(\bm x) < \mathcal{L}_\eta(\bm z) < \mathcal{L}_\eta(\bm x) + \gamma\right\}$, we have
    \begin{align}
        \xi'\left(\mathcal{L}_\eta(\bm x') - \mathcal{L}_\eta(\bm x)\right) \cdot ||\nabla \mathcal{L}_\eta(\bm x')||_2 &= \frac{||\nabla \mathcal{L}_\eta(\bm x')||_2}{\sqrt{c(\mathcal{L}_\eta(\bm x') - \mathcal{L}_\eta(\bm x))}} \nonumber\\
        &\geq \frac{c}{\sqrt{c^2}} \nonumber\\
        &= 1 \label{eq:noncritical_kl}
    \end{align}

    (ii) Regarding the critical points of $\mathcal{L}_\eta$, we use another well-known result (see Theorem~ŁI in~\cite{kurdyka1998gradients}) stating that, given an open bounded subset $\Omega$ of $\mathbb{R}^n$, for any function $g : \Omega \to \mathbb{R}$ differentiable on $\Omega \setminus g^{-1}(0)$, there exist $c > 0$, $\gamma > 0$ and $0 \leq \alpha < 1$ such that
    \begin{equation}
        ||\nabla g(\bm x)||_2 \geq c|g(\bm x)|^\alpha.
    \end{equation}
    for any $\bm x \in \Omega$ such that $|g(\bm x)| \in (0, \gamma)$.

    Let $g(\bm x) = \mathcal{L}_\eta(\bm x) - \mathcal{L}_\eta(\bm x^*)$, where $\bm x^*$ is a critical point of $\mathcal{L}_\eta$. Then, $g$ is differentiable on all $\bm x \in \text{dom } \partial \mathcal{L}_\eta$. Therefore, there exist $c > 0$, $\gamma > 0$ and $0 \leq \alpha < 1$ such that, for any $\bm x \in U = \left\{x' | |\mathcal{L}_\eta(\bm x') - \mathcal{L}_\eta(\bm x^*)| < \gamma\right\}$,
    \begin{align}
        ||\nabla \mathcal{L}_\eta(\bm x)||_2 = ||\nabla g(\bm x)||_2 \geq c|g(\bm x)|^\alpha = c|\mathcal{L}_\eta(\bm x) - \mathcal{L}_\eta(\bm x^*)|^\alpha, \label{eq:kl_proof_critical}
    \end{align}
    since $\bm x^*$ is a critical point. This yields
    \begin{align}
        \frac{||\nabla \mathcal{L}_\eta(\bm x)||_2}{c|\mathcal{L}_\eta(\bm x) - \mathcal{L}_\eta(\bm x^*)|^\alpha} \geq 1 \label{eq:kl_proof_prop}
    \end{align}

    Choosing $\bm x \in U \cap \left\{\bm x' | \mathcal{L}_\eta(\bm x^*) < \mathcal{L}_\eta(\bm x') < \mathcal{L}_\eta(\bm x^*) + \gamma\right\}$ and $\xi(s) = \frac{1}{c(1 - \alpha)} s^{1 - \alpha}$ (so that $\xi'(s) = \frac{1}{c s^\alpha}$) allows us to rewrite~(\ref{eq:kl_proof_prop}) as
    \begin{equation}
        \xi'\left(\mathcal{L}_\eta(\bm x) - \mathcal{L}_\eta(\bm x^*)\right) \cdot ||\nabla \mathcal{L}_\eta(\bm x)||_2 = \frac{||\nabla \mathcal{L}_\eta(\bm x)||_2}{c\left(\mathcal{L}_\eta(\bm x) - \mathcal{L}_\eta(\bm x^*)\right)^\alpha} \geq 1. \label{eq:critical_kl}
    \end{equation}

    Combining~(\ref{eq:noncritical_kl}) and~(\ref{eq:critical_kl}) concludes our proof.
\end{proof}

\section{Immediate Regret Bound} \label{app:algo_optimal}

In this section, we discuss the asymptotic optimality of \AlgLagrangian~and we provide the proof for Theorem~\ref{thm:immediate_regret}. We start by proving the following inequality that links $f(\bm x)$ with the posterior mean and variance of $f$.

\begin{lemma} \label{lem:concentration_bound}
Pick $\delta \in (0, 1)$ and let $\beta_t = 2 \log\left(\frac{|\mathcal{D}|\pi^2t^2}{6\delta}\right)$. Then, with probability at least $1 - \delta$,
\begin{equation}
    |f(\bm x) - \mu_t(\bm x)| \leq \beta_t^{\frac{1}{2}}\left(\sum_{i = 1}^n \sqrt{\sum_{k \in \mathcal{N}_i} \frac{\left(\sigma_t^{(k)}(\bm x_{\mathcal{V}_k})\right)^2}{|\mathcal{N}_k|^2}}\right)
    \label{eq:concentration_bound_lemma}
\end{equation}
for all $\bm x \in \mathcal{D}$ and $t \in \mathbb{N}$. $\mu_t(\bm x)$ and $\sigma_t^2(\bm x) = \sum_{i = 1}^n \left(\sigma_t^{(i)}(\bm x_{\mathcal{V}_i})\right)^2$ are given by the decomposition in Proposition~\ref{prop:inference}.
\end{lemma}

\begin{proof}
    For all $\bm x \in \mathcal{D}$ and $t \in \mathbb{N}$, we have $f(\bm x) \sim \mathcal{N}\left(\mu_t(\bm x), \sigma_t^2(\bm x)\right)$. Defining $s_t(\bm x) = \frac{f(\bm x) - \mu_t(\bm x)}{\sigma_t(\bm x)}$, we know that $s_t(\bm x) \sim \mathcal{N}(0, 1)$. Therefore we have successively that
    \begin{align}
        \Pr\left(|s_t(\bm x)| \leq \beta_t^{\frac{1}{2}}\right) &\geq 1 - e^{-\frac{\beta_t}{2}} \nonumber \\
        \Pr\left(|f(\bm x) - \mu_t(\bm x)| \leq \beta_t^{\frac{1}{2}} \sigma_t(\bm x)\right) &\geq 1 - e^{-\frac{\beta_t}{2}} \nonumber\\
        \Pr\left(|f(\bm x) - \mu_t(\bm x)| \leq \beta_t^{\frac{1}{2}}\left(\sum_{i = 1}^n \sqrt{\sum_{k \in \mathcal{N}_i} \frac{\left(\sigma_t^{(k)}(\bm x_{\mathcal{V}_k})\right)^2}{|\mathcal{N}_k|^2}}\right)\right) &\geq 1 - e^{-\frac{\beta_t}{2}} \label{eq:single_concentration_bound}
    \end{align}
    where the last inequality~(\ref{eq:single_concentration_bound}) follows from~(\ref{eq:better_approx}) (see Theorem~\ref{thm:better_approx}). Inequality~(\ref{eq:single_concentration_bound}) holds for one single pair $(t, \bm x)$. Applying the union bound for all pairs in $\mathbb{N} \times \mathcal{D}$, we have $\forall t \in \mathbb{N}, \forall \bm x \in \mathcal{D}$
    \begin{align}
        \Pr\left(|f(\bm x) - \mu_t(\bm x)| \leq \beta_t^{\frac{1}{2}}\sum_{i = 1}^n \sqrt{\sum_{k \in \mathcal{N}_i} \frac{\left(\sigma_t^{(k)}(\bm x_{\mathcal{V}_k})\right)^2}{|\mathcal{N}_k|^2}}\right) \geq 1 - |\mathcal{D}|\sum_{t = 1}^{+\infty} e^{-\frac{\beta_t}{2}}. \label{eq:union_bound_ineq}
    \end{align}
    Pick $\delta \in (0, 1)$ and let $\beta_t = 2\log\left(\frac{|\mathcal{D}|\pi^2t^2}{6\delta}\right)$. Then
    \begin{align}
        |\mathcal{D}|\sum_{t = 1}^{+\infty} e^{-\frac{\beta_t}{2}} &= |\mathcal{D}|\sum_{t = 1}^{+\infty} e^{-\log\left(\frac{|\mathcal{D}|\pi^2t^2}{6\delta}\right)} \nonumber\\
        &= \frac{6\delta}{\pi^2} \sum_{t = 1}^{+\infty} \frac{1}{t^2} \nonumber\\
        &= \delta. \nonumber
    \end{align}
    Therefore, (\ref{eq:union_bound_ineq}) becomes
    \begin{align}
        \Pr\left(|f(\bm x) - \mu_t(\bm x)| \leq \beta_t^{\frac{1}{2}}\sum_{i = 1}^n \sqrt{\sum_{k \in \mathcal{N}_i} \frac{\left(\sigma_t^{(k)}(\bm x_{\mathcal{V}_k})\right)^2}{|\mathcal{N}_k|^2}}\right) \geq 1 - \delta \label{eq:union_bound_ineq_final}
    \end{align}
    which is the desired result.
\end{proof}

We are now ready to bound the immediate regret $r_t = f(\bm x^*) - f(\bm x^t)$ and prove Theorem~\ref{thm:immediate_regret}.

\begin{proof}
    By definition, $\bm x^t = \argmax_{\bm x \in \mathcal{D}} \sum_{i = 1}^{n} \varphi_t^{(i)}(\bm x_{\mathcal{V}_i})$. Therefore, $\sum_{i = 1}^{n} \varphi_t^{(i)}(\bm x^t_{\mathcal{V}_i}) \geq \sum_{i = 1}^n \varphi_t^{(i)}(\bm x^*_{\mathcal{V}_i})$ and
    \begin{align}
        \sum_{i = 1}^{n} \varphi_t^{(i)}(\bm x^t_{\mathcal{V}_i}) = \sum_{i = 1}^n \mu^{(i)}(\bm x^t_{\mathcal{V}_i}) + \beta_t^{\frac{1}{2}} \sqrt{\sum_{k \in \mathcal{N}_i} \frac{\left(\sigma_t^{(k)}(\bm x^t_{\mathcal{V}_k})\right)^2}{|\mathcal{N}_k|^2}} &\geq \sum_{i = 1}^n \varphi_t^{(i)}(\bm x^*_{\mathcal{V}_i'}) \nonumber \\
        &\geq f(\bm x^*) \label{eq:greater_than_max}
    \end{align}
    with~(\ref{eq:greater_than_max}) following from Lemma~\ref{lem:concentration_bound} with probability $1 - \delta$. We can now upper bound the immediate regret $r_t$, since with probability $1 - \delta$, we have
    \begin{align}
        r_t &= f(\bm x^*) - f(\bm x^t) \nonumber\\
        &\leq \sum_{i = 1}^n \mu^{(i)}(\bm x^t_{\mathcal{V}_i}) + \beta_t^{\frac{1}{2}} \sqrt{\sum_{k \in \mathcal{N}_i} \frac{\left(\sigma_t^{(k)}(\bm x^t_{\mathcal{V}_k})\right)^2}{|\mathcal{N}_k|^2}} - f(\bm x^t) \nonumber\\
        &= \sum_{i = 1}^n \mu^{(i)}(\bm x^t_{\mathcal{V}_i}) - f(\bm x^t) + \beta_t^{\frac{1}{2}} \sqrt{\sum_{k \in \mathcal{N}_i} \frac{\left(\sigma_t^{(k)}(\bm x^t_{\mathcal{V}_k})\right)^2}{|\mathcal{N}_k|^2}} \nonumber\\
        &= \mu_t(\bm x^t) - f(\bm x^t) + \beta_t^{\frac{1}{2}} \sum_{i = 1}^n \sqrt{\sum_{k \in \mathcal{N}_i} \frac{\left(\sigma_t^{(k)}(\bm x^t_{\mathcal{V}_k})\right)^2}{|\mathcal{N}_k|^2}} . \label{eq:proto_regret_bound}
    \end{align}
    Combining (\ref{eq:concentration_bound_lemma}) with~(\ref{eq:proto_regret_bound}), we get
    \begin{align}
        \Pr\left(r_t \leq 2\beta_t^{\frac{1}{2}}\sum_{i = 1}^n \sqrt{\sum_{k \in \mathcal{N}_i} \frac{\left(\sigma_t^{(k)}(\bm x^t_{\mathcal{V}_k})\right)^2}{|\mathcal{N}_k|^2}}\right) \geq 1 - \delta,
    \end{align}
    which is the desired result.
\end{proof}

\section{Synthetic Functions} \label{app:synthetic}

In this section, we describe the synthetic functions constituting our benchmark in Section~\ref{sec:synth_func}.

\subsection{Six-Hump Camel Function}

The Six-Hump Camel function is a 2-dimensional function defined by
\begin{equation}
    f(x_1, x_2) = \left(-4 + 2.1x_1^2 - \frac{x_1^4}{3}\right)x_1^2 - x_1x_2 + \left(4 - 4x_2^2\right)x_2^2.
\end{equation}
It is composed of $n = 3$ factors, with a MFS $\Bar{d} = 2$. In our experiment, we optimize it on the rectangle $\mathcal{D} = [-3, 3] \times [-2, 2]$. It has 6 local maxima, two of which are global with $f(\bm x^*) = 1.0316$.

\subsection{Hartmann Function}

The Hartmann function is a 6-dimensional function defined by
\begin{equation}
    f(\bm x) = \sum_{i = 1}^4 \alpha_i \exp\left(-\sum_{j = 1}^6 A_{ij} \left(x_j - P_{ij}\right)^2\right),
\end{equation}
with $\bm{\alpha} = \left(\alpha_i\right)_{i \in \llbracket1, 4\rrbracket}$, $\bm{A} = \left(A_{ij}\right)_{(i, j) \in \llbracket1, 4\rrbracket \times \llbracket1, 6\rrbracket}$ and $\bm{P} = \left(P_{ij}\right)_{(i,j) \in \llbracket1, 4\rrbracket \times \llbracket1, 6\rrbracket}$ given as constants.

It is composed of $n = 4$ factors, with a MFS $\Bar{d} = 6$. In our experiment, we optimize it on the hypercube $\mathcal{D} = [0, 1]^6$. It has 6 local maxima and a global maximum with $f(\bm{x}^*) = 10.5364$.

\subsection{Powell Function}

The Powell function is a function of an arbitrary number $d = 4k$ of dimensions, defined by
\begin{equation}
    f(\bm x) = -\sum_{i = 1}^{d / 4} (x_{4i - 3} + 10x_{4i-2})^2 + 5(x_{4i - 1} - x_{4i})^2 + (x_{4i-2} - 2x_{4i-1})^4 + 10(x_{4i-3} - x_{4i})^4.
\end{equation}

We chose to set $k = 6$, so that the resulting Powell function lives in a $d = 24$ dimensional space. It is composed of $n = 6$ factors, with a MFS $\Bar{d} = 4$. In our experiment, we optimize it on the hypercube $\mathcal{D} = [-4, 5]^{24}$. It has a global maximum at $\bm x^* = \bm 0$, with $f(\bm x^*) = 0$.

\subsection{Rastrigin Function}

The Rastrigin function is a function of an arbitrary number $d$ of dimensions, defined by
\begin{equation}
    f(\bm x) = -10d -\sum_{i = 1}^d x_i^2 - 10\cos\left(2\pi x_i\right).
\end{equation}

We chose to set $d = 100$. We also chose to aggregate some factors to make the optimization problem harder. The resulting Rastrigin function is composed of $n = 20$ factors, with a MFS $\Bar{d} = 5$. In our experiment, we optimize it on the hypercube $\mathcal{D} = [-5.12, 5.12]^{100}$. It has multiple, regularly distributed local maxima, with a global maximum at $\bm x^* = \bm 0$ and $f(\bm x^*) = 0$.

\subsection{Additional Figures}

\begin{figure}[t]
    \centering
    \subfigure[]{\includegraphics[height=3.5cm]{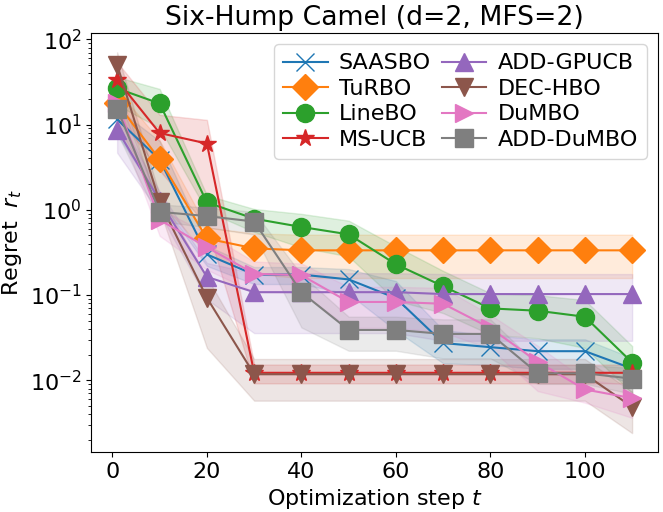} \label{fig:shc}}
    \subfigure[]{\includegraphics[height=3.5cm]{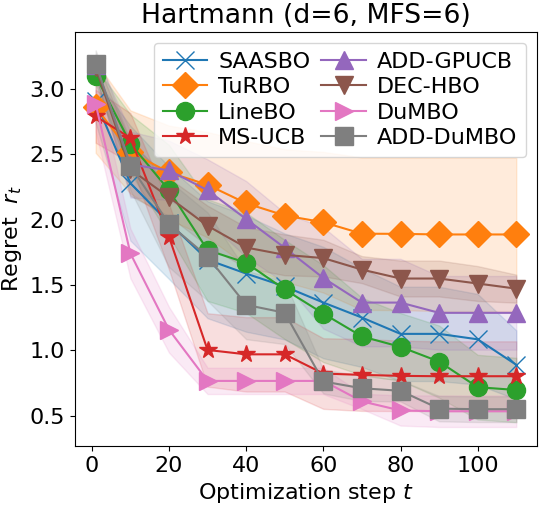} \label{fig:hartmann}}
    \subfigure[]{\includegraphics[height=3.5cm]{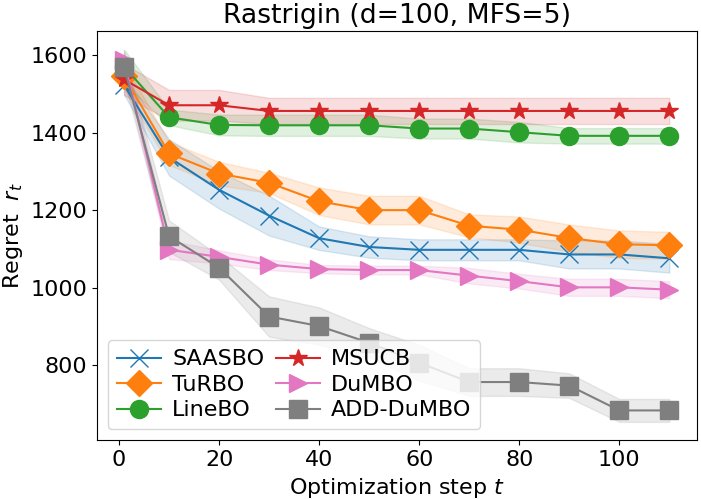} \label{fig:rastrigin}}
    \caption{Performance achieved by the studied BO algorithms for (a)~the 2d Six-Hump Camel function, (b)~the 6d Hartmann function and (c)~the 100d Rastrigin function. The shaded areas indicate the standard error intervals.}
    \label{fig:synth_func}
\end{figure}

Figure~\ref{fig:synth_func} depicts the performance of the studied BO algorithms on the synthetic functions not discussed in Section~\ref{sec:synth_func}.

Figure~\ref{fig:shc} reports the minimal regrets achieved by the solutions on the Six-Hump Camel (SHC) function. Observe that in this specific example, \AlgLagrangian~clearly outperforms every other state-of-the-art algorithm except DEC-HBO. This is due to the simplicity of the SHC function that satisfies all the assumptions made by DEC-HBO: a MFS $\Bar{d}$ lower than 3 and a sparse factor graph. In this case, the variant of the max-sum algorithm used by DEC-HBO is guaranteed to query $\argmax \varphi_t$ at each time step $t$.

Figures~\ref{fig:hartmann} and~\ref{fig:rastrigin} depict dynamics similar to Figure~\ref{fig:powell}. In both cases, the ability to infer or exploit a complex additive decomposition gives \AlgLagrangian~a decisive advantage against the other BO algorithms. As a consequence, \AlgLagrangian~and ADD-\AlgLagrangian~manage to outperform them, even in very high dimensional input spaces (see Figure~\ref{fig:rastrigin}). Note that ADD-GPUCB and DEC-HBO were not evaluated on the Rastrigin function, as their execution time exceeded 24 hours because of the large dimensionality of the function.

Finally, observe that LineBO and MS-UCB tend to become less effective as the dimension of the problem grows. Indeed, they adopt strategies (random line searches and discrete sampling in high-dimensional spaces) that are very sensitive to the dimension of the problem.

\section{Real-World Problems} \label{app:real-world}

In this appendix, we describe the real-world problems constituting our benchmark in Section~\ref{sec:real_xp}.

\subsection{Cosmological Constants}

The cosmological constants problem consists in fine-tuning an astrophysics tool to optimize the likelihood of some observed data. We chose to compute the likelihood of the galaxy clustering in~\cite{Chuang_2013} from the Data Release 9 (DR9) CMASS sample of the SDSS-III Baryon Oscillation Spectroscopic Survey (BOSS). To compute the likelihood, we instrumented the cosmological parameter estimation code CosmoSIS~\cite{Zuntz_2015}\footnote{\url{https://cosmosis.readthedocs.io/en/latest/reference/standard_library/BOSS.html}}.

We used nine cosmological constants in our optimization task, going from the Hubble's constant to the mass of the neutrinos. If a BO algorithm provided a set of inconsistent cosmological constants, a likelihood of $y = -60$ was returned.

Note that similar experiments were described in other works, such as~\cite{addgpucb, turbo}. However, they were conducted on another, older dataset, with a deprecated NASA simulator\footnote{\url{https://lambda.gsfc.nasa.gov/toolbox/lrgdr/}}. This makes the conducted experiments painful to reproduce on a modern computer. Luckily, CosmoSIS is well documented and easier to install and instrument, so we conducted our experiment with CosmoSIS to make it easier to replicate.

\begin{figure}[t]
    \centering
    \includegraphics[height=5cm]{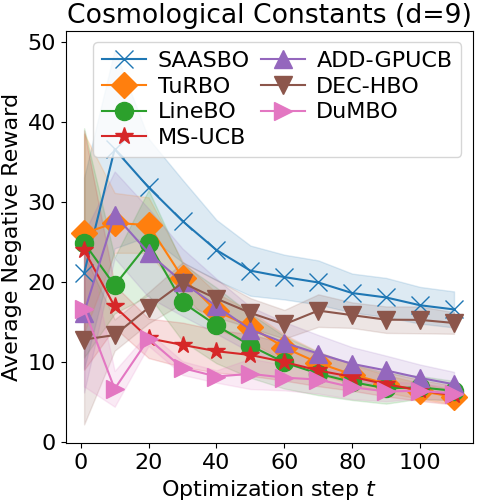}
    \caption{Performance of the studied BO algorithms on the cosmological constants fine-tuning problem.}
    \label{fig:cosmo}
\end{figure}

Figure~\ref{fig:cosmo} depicts the performance of the described BO algorithms on this problem. Note that, since the objective function does not have an additive decomposition, ADD-\AlgLagrangian~cannot be evaluated. Although the objective function does not have an additive decomposition, \AlgLagrangian~demonstrates its competitiveness by achieving the best performance, along with TuRBO, LineBO and MS-UCB.

\subsection{Shannon Capacity of a WLAN}

The Shannon capacity~\cite{kemperman-shannon:1974} sets a theoretical upper bound on the throughput of a wireless communication, depending on the Signal-to-Interference plus Noise Ratio (SINR) of the communication. Denoting by $S_{i,j}$ the SINR between two wireless devices $i$ and $j$ communicating on a radio channel of bandwidth $W$ (in Hz), the Shannon capacity $C(S_{i,j})$ (in bits) is defined by
\begin{equation}
    C(S_{i,j}) = W \log_2\left(1 + S_{i,j}\right) . \label{eq:shannon}
\end{equation}

In this problem, we study a Wireless Local Area Network (WLAN) with end-users associated to nodes streaming a continuous, large amount of data. The WLAN topology is depicted in Figure~\ref{fig:graph}. It is populated with 36 end-users, each one associated to one of the 12 depicted nodes. Note that each node is within the radio range of at least two other nodes. This creates interference and, consequently, reduces the SINRs between nodes and end-users.

\begin{figure}[t]
    \centering
    \includegraphics[height=5cm]{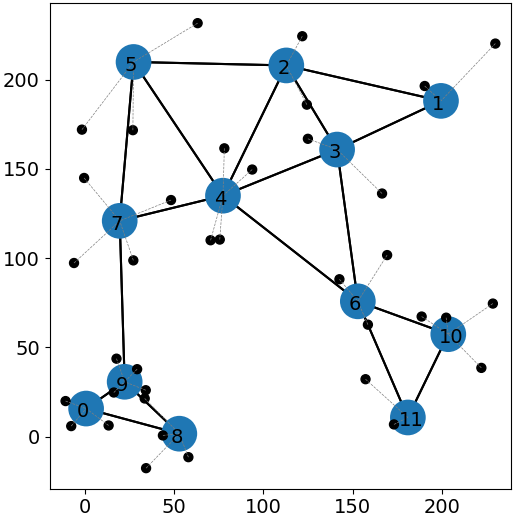}
    \caption{The WLAN topology used in the Shannon capacity optimization experiment. The end-users are depicted as black dots, the nodes as numbered blue circles and the associations between end-users and nodes as thin gray lines. Two nodes are connected with a black line if they are within the radio range of each other.}
    \label{fig:graph}
\end{figure}

Each node has an adjustable transmission power $x_i \in [10^{0.1}, 10^{2.5}]$ in mW (milliwatts). This task consists in jointly optimizing the Shannon capacity~(\ref{eq:shannon}) of each pair (node, associated end-user) by tuning the transmission power of the nodes. That is, the objective function $f$ is a 12-dimensional function defined by
\begin{equation}
    f(\bm x) = \sum_{i = 1}^{12} \sum_{j \in \mathcal{N}_i} C(S_{i,j}),
\end{equation}
with $\mathcal{N}_i$ the set of end-users associated to node $i$.

A difficult trade-off needs to be found because a node cannot simply use the maximum transmission power as this would cause a lot of interference for the neighboring nodes. Given a configuration $\bm x \in \mathcal{D} = [10^{0.1}, 10^{2.5}]^{12}$, the SINRs are provided by the well-recognized network simulator ns-3~\cite{ns-3} that reliably reproduces the WLAN internal dynamics. The additive decomposition comprises $n = 12$ factors, with a MFS of $\Bar{d} = 5$, obtained by making the reasonable assumption that only the neighboring nodes of node $i$ (\textit{i.e.} those within the radio range of node $i$)  are creating interference for the communications of node $i$.

\subsection{Rover Trajectory Planning}

This problem was also considered by~\cite{turbo, wang-batched:2018}. The goal is to optimize the trajectory of a rover from a starting point $\bm s \in [0, 1]^2$ to a target $\bm t \in [0, 1]^2$, over a rough terrain.

The trajectory is defined by a vector of $d = 60$ dimensions, reshaped into 30 2-d points in $[0, 1]$. A B-spline is fitted to these 30 points, determining the trajectory of the rover. The objective function to optimize is
\begin{equation}
    f(\bm x) = -c(\bm x) - 10(||\bm x_{0,1} - \bm s||_1 + ||\bm x_{59,60} - \bm t||_1),
\end{equation}
with $c(\bm x)$ the cost of the trajectory, obtained by integrating the terrain roughness function over the B-spline, and the two $L_1$-norms serving as incentives to start the trajectory near $\bm s$, and to end it near~$\bm t$.

\section{Wall-Clock Time} \label{app:wc_time}

In this section, we provide wall-clock time measurements (excluding the evaluation time of the objective function) of the described BO algorithms on a synthetic function (24d Powell) and a real-world problem (WLAN) described in Appendices~\ref{app:synthetic} and~\ref{app:real-world} respectively. The measurements were taken using a server equipped with two Intel(R) Xeon(R) CPU E5-2690 v4 @ 2.60GHz, with 14 cores (28 threads) each.

\begin{figure}[t]
    \centering
    \subfigure[]{\includegraphics[height=4.4cm]{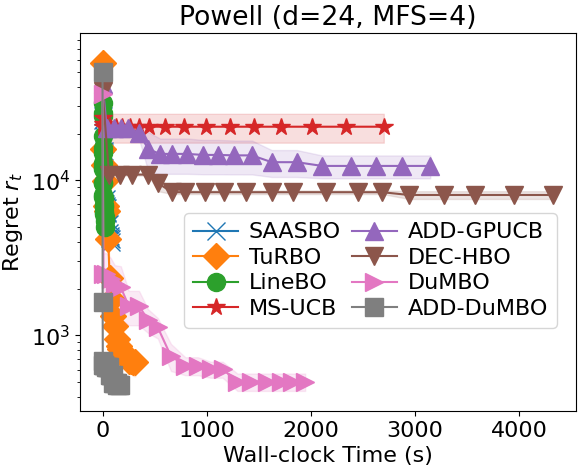} \label{fig:powell_time}}
    \subfigure[]{\includegraphics[height=4.4cm]{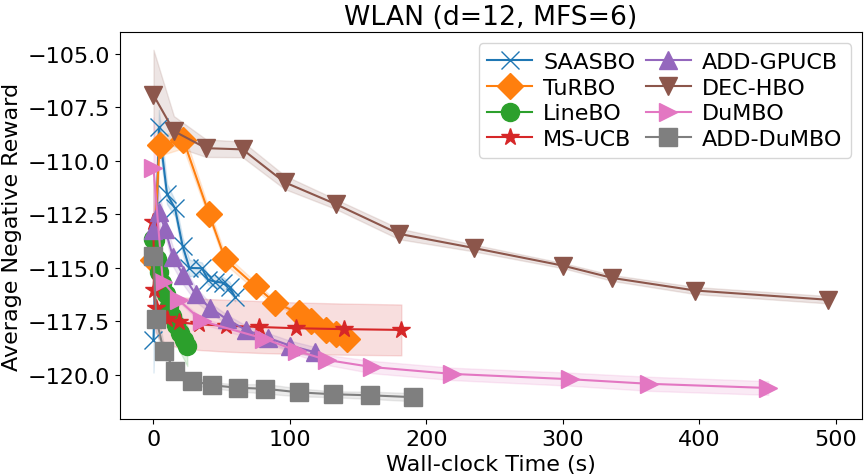} \label{fig:wlan_time}}
    \caption{Performance achieved by all the described BO algorithms (including the two versions of \AlgLagrangian) for (a)~the 24d Powell synthetic function and (b)~the maximization of the Shannon capacity in a WLAN. The shaded areas indicate the standard error intervals.}
    \label{fig:wall_clock}
\end{figure}

Figure~\ref{fig:wall_clock} gathers the wall-clock time measurements. Observe that \AlgLagrangian~does not only offer very competitive performance, it also exhibits a lower overhead when compared to the other decomposing algorithms (DEC-HBO and ADD-GPUCB). However, SAASBO, TuRBO and LineBO manage to get lower runtimes than \AlgLagrangian. This is not surprising since, by design, these methods have minimal overheads, at the expense of weakening their theoretical guarantees. As for MS-UCB, it manages to have a lower execution time on the WLAN problem (where $d=12$), but a larger execution time on Powell (where $d = 24$).

Nevertheless, with ADD-\AlgLagrangian, having access to the true additive decomposition of the function reduces the overhead of the solution, because the decomposition does not need to be inferred anymore. On the Powell synthetic function (see Figure~\ref{fig:powell_time}), ADD-\AlgLagrangian~even achieves the lowest wall-clock time.


\end{document}